\newcommand{\deepexpress}{\textsc{DeepQuant}}
\newcommand{\deepgo}{\textsc{DeepGO}}
\newtheorem{definition}{Definition}
\newtheorem{theorem}{Theorem}
\newcommand{\commentout}[1]{}
\def \one{{\it i)}}
\def \two{{\it ii)}}
\def \three{{\it iii)}}
\newcommand{\localmetric}{Q}
\newcommand{\plainfrac}[2]{#1/#2}
\newcommand{\mesh}{\Delta_k^\text{mesh}}
\newcommand{\pollmesh}{\Delta_k^\text{poll}}
\newcommand{\meshz}{\Delta_0^\text{mesh}}
\newcommand{\pollmeshz}{\Delta_0^\text{poll}}
\newcommand{\D}{\text{\textbf{D}}}
\newcommand{\I}{{\text{\textbf{I}}}}
\newcommand{\nparams}{n}
\newcommand{\poll}{\textsc{poll}}
\newcommand{\search}{\textsc{search}}
\title{Towards the Quantification of Safety Risks in Deep Neural Networks}
\author{
 Peipei XU\thanks{This work is supported by the UK EPSRC projects on Offshore Robotics for Certification of Assets (ORCA) [EP/R026173/1] and End-to-End Conceptual Guarding of Neural Architectures [EP/T026995/1], and ORCA Partnership Resource Fund (PRF) on Towards the Accountable and Explainable Learning-enabled Autonomous Robotic Systems.} \\
  Department of Computer Science\\
  University of Liverpool\\
  Liverpool, L69 3BX, United Kingdom\\
  \texttt{peipei.xu@liverpool.ac.uk} \\
   \And
   Wenjie~Ruan \\
   School of Computing and Communications\\
   Lancaster University\\
   Lancaster, LA1 4WA, United Kingdom\\
   wenjie.ruan@lancaster.ac.uk
   \AND
   Xiaowei Huang \\
   Department of Computer Science\\
   University of Liverpool\\
   Liverpool, L69 3BX, United Kingdom\\
   \texttt{xiaowei.huang@liverpool.ac.uk} \\
}
\begin{document}
\maketitle

\begin{abstract}
Safety concerns on the deep neural networks (DNNs) have been raised when they are applied to critical sectors. 
In this paper, we define safety risks by requesting the alignment of network's decision with human perception. To enable a general methodology for quantifying safety risks, we define a generic safety property and instantiate it to express various safety risks. 
For the quantification of risks, we take the maximum radius of safe norm balls, in which no safety risk exists. 
The computation of the maximum safe radius is reduced to the computation of their respective Lipschitz metrics -- the quantities to be computed. 
In addition to the known adversarial example, reachability example, and invariant example, in this paper we identify a new class of risk -- uncertainty example -- on which humans can tell easily but the network is unsure.
We develop an algorithm, inspired by derivative-free optimization techniques and accelerated by tensor-based parallelization on GPUs, to support an efficient computation of the metrics.
We perform evaluations on several benchmark neural networks, including ACSC-Xu, MNIST, CIFAR-10, and ImageNet networks. 
The experiments show that, our method can
achieve competitive performance on safety quantification in terms of the tightness and the efficiency of computation. Importantly, as a generic approach, our method can work with a broad class of safety risks and without restrictions on the structure of neural networks. 
We release our tool in GitHub: \url{https://github.com/TrustAI/DeepQuant} for the community to use.

\end{abstract}

\section{Introduction} \label{sec:intro}

In recent years, we witness significant progress has been made in AI, especially the deep neural networks that can achieve surprisingly high performance on various tasks, including image recognition \cite{russakovsky2015imagenet}, natural language processing \cite{lecun2015deep}, and games \cite{alphaGoZero}. As a key component, deep neural networks have also been widely used in a range of safety-critical applications such as fully- or semi-autonomous vehicles~\cite{maqueda2018event}, drug discovery~\cite{webb2018deep} and automated medical diagnosis~\cite{esteva2019guide}. 
The applications of neural networks in safety-critical systems bring a new challenge. As recent research demonstrated~\cite{SZSBEGF2014,FGSM}, despite of achieving
high accuracy, DNNs are vulnerable to adversarial examples, i.e., adding a small 
perturbation to a genuine image will result in an erroneous 
output. Such phenomena essentially implies that, neural network's 
accuracy and its robustness may not be positively correlated~\cite{tsipras2018robustness}. As a result, it is extremely crucial that a neural network model can be practically evaluated on its safety and robustness~\cite{weng2018evaluating,SWRHKK2018,huang2019coverage}. 

Many research efforts have been directed towards developing approaches to evaluate neural network's robustness by 
crafting adversarial examples~ \cite{athalye2017synthesizing,ilyas2018black,galloway2017attacking,mopuri2018nag,yanghao2020generalizing}, 
including notably FGSM \cite{SZSBEGF2014}, JSMA \cite{JSMA}, C\&W \cite{CW-Attacks}, etc. These approaches can only falsify robustness claims, yet cannot verify,  
because no theoretical guarantee is provided on their results. 
Originated from verification community recently, some research works have instead focused on robustness evaluation with rigorous 
guarantees~\cite{peck2017lower,hein2017formal}, i.e., if no adversarial examples found, the proposed solution can guarantee that DNN's output is invariant to adversarial perturbation. 
These techniques rely on either a reduction to a constraint solving problem by encoding the network as a set of constraints \cite{katz2017reluplex},
an exhaustive search of the neighbourhood of an image \cite{HKWW2017}, or an over-approximation method~\cite{gehr2018ai}, etc. However, these approaches can only work with small-scale neural networks in a white-box manner\footnote{Namely, the structure and the internal weights of DNNs need to be known}, and have not been able to work with a practical
state-of-the-art neural networks such as 
various ImageNet models. Moreover, most of them are dedicated for a particular single safety risk such as local or point-wise robustness. Please refer to our recent survey for details~\cite{safetysurvey}.

In this regard, this paper works towards a generic quantification framework that is able to
\one~work with different classes of safety risks; \two~provide guarantee 
on its quantification results; and \three~applicable to large-scale neural networks with a broad range of layers and activation functions. 
To achieve these goals, we introduce a generic property expression parameterised over the output of a DNN, define metrics over this expression, and 
develop a tool \deepexpress\ to evaluate the metrics on DNNs. By instantiating the property expression with various specific forms and consider different metrics, \deepexpress\ can evaluate different safety risks on neural networks including 
the local and global robustness, as well as the decision uncertainty, a new type of safety risks that is firstly studied in this paper.
Specifically, the key technical contributions of this paper lie on the following aspects.

First, we study safety risks by assuming that network's decision needs to align with human perception. Under this assumption, we identify another class of safety risks other than the known ones -- adversarial example \cite{SZSBEGF2014}, reachability example \cite{dutta2017output,RHK2018}, and invariant example \cite{jacobsen2020exploiting} -- and name it as \textbf{uncertainty example}. 
Fig.~\ref{fig-1} presents the intuition of these safety risks.
Different from adversarial example on which the network is certain about its decision (although the decision is incorrect w.r.t. human perception), uncertainty example lies on the vicinity of the intersection point of all decision boundaries (marked by red dashed line circle in Fig.~\ref{fig-1}) and should be without any confusion with human perception. uncertainty are more difficult to evaluate than robustness because the intersection areas of \emph{all} decision boundaries are very sparse in the input space.
The potential disastrous consequence of uncertainty example will be  discussed in the paper.

Second, to work with different safety risks in a 
framework, we introduce a \textbf{generic safety property expression} and show that it can be instantiated to express various risks. 
The quantification of the risks is then defined as the maximum  radius of safe norm balls, in which no risk is present. Then, we show that, a
\emph{conservative} estimation of the maximum radius can be done by computing a \textbf{Lipschitz metric} over the safety property.  

Third, we develop an algorithm, inspired by a derivative-free optimisation technique called Mesh Adaptive Direct Search, to compute the Lipschitz metric. The algorithm is able to work on large-scale neural networks and does not require to know the internal weights or structures of DNNs. 
Moreover, as indicated in Fig. \ref{fig-e3}, our algorithm is tensor-based, to take advantage of the significant capability of \textbf{GPU parallelisation}. 

Finally, we implement the approach into a tool  \deepexpress\footnote{The software is provided via github:~\url{https://github.com/TrustAI/DeepQuant}} and validate it over an extensive set of networks, including large-scale ImageNet DNNs with millions of neurons and tens of layers. The experiments show competitive performance of \deepexpress\ in a number of benchmark networks  with respect to state-of-the-art tools ReluPlex \cite{katz2017reluplex}, SHERLOCK \cite{dutta2017output}, and DeepGO \cite{RHK2018}: it is able to \textbf{efficiently} achieve \textbf{tight} bounds. Other than the performance, our method can work without restrictions on the safety properties and the structure of neural networks. This is in contrast with existing tools, for example ReluPlex and SHERLOCK can only work with small network with ReLU activation functions and DeepGO can only work with robustness and reachability. 
In summary, the novelty of this paper lies on the following aspects:

\begin{itemize}

\item This paper introduce a generic property expression that provides a principal and unified tool to quantify various safety risks on deep neural networks.

\item We theoretically prove that the proposed Lipschitzian robustness expression bounds the true robustness in terms of classification-invariant space.
\item This paper, as the the first research work, identifies a new type of risk of neural networks by uncertainty examples, as well as provides an efficient method to locate such uncertainty spots.

\item We implement the proposed solution as a software tool - \deepexpress\ that is applicable to large-scale deep neural networks including various ImageNet models.
\end{itemize}

\begin{figure*}[ht]
\centering
	    \centering
	    \begin{minipage}{0.4\linewidth}
	        \centering
		    \includegraphics[width=1\linewidth]{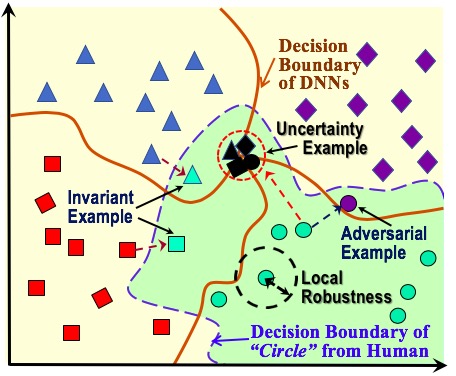}
		    \text{(a)}
	    \end{minipage}%
	    \hspace{3mm}
	    \begin{minipage}{0.53\linewidth}
		    \centering
		    \includegraphics[width=1\linewidth]{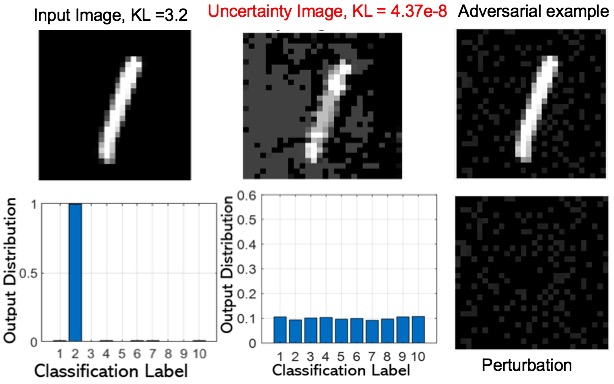}
		    \text{(b)}
	    \end{minipage}
	\centering
	\caption{{\bf(a)} Illustration of three safety risks - adversarial example \cite{SZSBEGF2014}, invariant example \cite{jacobsen2020exploiting}, and uncertainty example (\textbf{this paper}). 
	{\bf~~(b)} An example to compare uncertainty example with adversarial example in MNIST. The First Row: the first image is the raw input image, the second image is the uncertainty example (identified by our tool) and the third image is the adversarial example; The Second Row: the corresponding output probabilistic distributions of DNNs on raw input image and uncertainty example, and the adversarial perturbation.
	}
    \label{fig-1}
\end{figure*}

\section{Related Work} 
\label{sec:related}
We now discuss some of the closely related work in safety properties of neural networks.

\subsection{Adversarial Attacks}

As recent works show that DNNs are vulnerable to adversarial examples, there are constantly increasing number of attacks to generate adversarial examples with new countermeasures~\cite{tramer2018ensemble}. 
Adversarial attacks apply heuristic search algorithms to find adversarial examples. 
Starting from Limited-memory Broyden-Fletcher-Goldfarb-Shanno (L-BFGS) algorithm~\cite{SZSBEGF2014}, 
a number of adversarial attack algorithms have been developed, including notably FGSM \cite{FGSM}, JSMA \cite{JSMA}, C\&W attacks \cite{CW-Attacks}, RecurJac \cite{zhang2018recurjac}, one-pixel attacks~\cite{ su2019one}, structured attack~\cite{ xu2018structured}, binary attack~\cite{galloway2017attacking} etc.

Most of current works are guided by the forward gradient or the gradient of the cost-function, which in turn rely on the existence of first-order derivative, i.e., differentiability, of neural network. The method proposed in this paper relaxes this assumption and can work with any neural network.   
Moreover, while adversarial attacks can falsify the robustness of a neural networks, 
our method can also verify the robustness, thanks to its theoretically grounded approach of taking a Lipschitzian metric with confidence interval expression as an indicator of the robustness. Finally, beyond robustness, our metric is generic and can express other properties such as Uncertainty.

\subsection{Safety/Formal Verification}

How to verify whether a given/particular neural network satisfies certain input-output properties is a very challenging task. Traditional verification of neural networks mainly focus on measuring the networks on a large collections of points in the input space and checking whether the outputs are as desired. However, due to the infinite of input space, it is not workable to check all possible inputs. Some networks may be vulnerable to adversarial attacks, although they can perform well on a large sample of inputs and not correctly extend to new situations.
The recent advances of neural network verification include 
the layer-by-layer exhaustive search approach~\cite{HKWW2017}, methods using constraint solvers~\cite{PT2010,katz2017reluplex}, global optimisation approaches~\cite{WHK2017,RHK2018,WWRHK2018,RWSHKK2019}, the abstract interpretation approach \cite{gehr2018ai,mirman2018differentiable,LLYCHZ2019}, 
linear programming (LP)~\cite{Wong2018ConDual} or mixed-integer linear programming (MILP)~\cite{MILP2017Tjing},
semi-definite relaxations~\cite{raghunathan2018semidefinite},
Lipschitz optimization~\cite{weng2018towards, BSZ2017}, and combining optimization with abstraction~\cite{Anderson2019OptimizationAA}. 
The properties studies include robustness \cite{HKWW2017,weng2018towards}, reachability (i.e., whether a given output is possible from a given subspace of inputs)~\cite{dutta2017output}, and properties expressible with SMT constraints \cite{PT2010,katz2017reluplex}. 

Verification approaches aim to provide guarantees on the obtained results. However, they cannot provide efficient solutions to large-scale neural networks. For example, constraint-based approaches such as Reluplex 
can only work with neural networks with a few hundreds hidden nodes~\cite{PT2010,katz2017reluplex,LM2017}.
Exhaustive search and global optimisation suffer from the state-space or dimensionality explosion problem~\cite{HKWW2017,RHK2018}.
Different from these solutions, the  quantification method proposed in this paper can work efficiently on large-scale neural networks against Lipschitzian properties.

\section{Safety Risks in Neural Networks} \label{sec:LipschitzMetric}

A (feed-forward and deep) neural network can be represented as a function $f: \mathbb{R}^n \rightarrow \mathbb{R}^m$ such that given an input $x\in \mathbb{R}^n$, it outputs a probabilistic distribution over a set of $m$ labels $\{1...m\}$, representing the probabilities of assigning labels to the input.  
We use $f_j(x)$ to denote the probability of labelling an input $x$ with the label $j$. Based on this, we define the labelling function $l:\mathbb{R}^n \rightarrow \{0...m\}$ as 
\begin{equation}
    l(x) = \left\{
    \begin{array}{ll}
      k ~~~~& |f_k(x) - \max_{j\neq k}f_j(x)| > \epsilon \\
      0 & \text{otherwise}
    \end{array}
    \right.
\end{equation}
where $k=\max_jf_j(x)$ is the label with the greatest confidence and $\epsilon$ is a threshold value. Intuitively, if there is a label $k \in \{1...m\}$ with significant confidence comparing to other labels $j\neq k$, we assign $x$ with the label $k$. On the other hand, if there is no label with significant confidence comparing to other labels, we assign $x$ with the label $0$, denoting that the network is not confident about its own decision.

In practice, a neural network 
is a complex, highly nonlinear function composed of a sequence of simple, linear or nonlinear functional mappings~\cite{Bishop2006, Goodfellow2016}. Typical functional mappings  
include fully-connected, convolutional, pooling, Softmax, and Sigmoid. In this paper, we treat the network as a blackbox and therefore can work with any internal layer and architecture as long as the network is feedforward.

\textit{Safety Risk:}
By training over a labelled dataset, a network $f$ is to simulate the decisions of a human $O: \mathbb{R}^n\rightarrow \{0...m\}$ on unseen inputs, where $O(x)=0$ represents that the human cannot decide on its labelling. Therefore, the safety risk of $f$ lies on the inconsistency of decisions between $f$ and $O$, as defined in Definition~\ref{def:safetyrisk} and  Definition~\ref{def:reachabilityexample}. 

\begin{definition}[Misalignment on Decision]\label{def:safetyrisk}
	Given a network $f: \mathbb{R}^n \to \mathbb{R}^m$, a human decision oracle $O: \mathbb{R}^n \to \{0..m\}$, and a legitimate input $x\in \mathbb{R}^n$ such that $l(x)=O(x) \neq 0$, we have the Table-\ref{tab:my_label} for $\hat x$ being another input that is perturbed from $x$. 

\begin{table*}[ht]
    \caption{Categories of safety risks by 
    the alignment of neural network decisions with human perception. 
    \textbf{Uncertainty examples} are for the first time studied in this paper.}
    \centering
    \begin{tabular}{|l|c|c|c|}
    \hline
         & $O(\hat x) = 0$ & $O(\hat x) = O(x)$ & $0 \neq O(\hat x) \neq O(x)$  \\
         \hline
        $l(\hat x) = 0$ & no error & \textbf{Uncertainty example}  & \textbf{Uncertainty example} \\ 
        $0\neq l(\hat x) = l(x)$ & adversarial example \cite{SZSBEGF2014} & no error & invariant example \cite{jacobsen2020exploiting}\\
        $0 \neq l(\hat x) \neq l(x)$ & adversarial example \cite{SZSBEGF2014} & adversarial example \cite{SZSBEGF2014} & no error \\
        \hline
    \end{tabular}
    \label{tab:my_label}
\end{table*}

\end{definition}

Intuitively, each entry in Table-\ref{tab:my_label} represents a possible scenario for $x$ and $\hat x$. For example, those entries on the diagonal represent that no obvious error can be inferred. For the case where $0 \neq l(\hat x) \neq l(x)$ and $O(\hat x) = O(x)$, human believes that the two inputs are in the same class but the network believes not, representing a typical case of adversarial example \cite{SZSBEGF2014}. 
The two entries with $O(\hat x)=0$ represent the scenarios where human is uncertain about $\hat x$ while the network has high confidence about it. They are also seen as adversarial examples. Moreover, invariant example \cite{jacobsen2020exploiting} occurs when $ x$ and $\hat x$ are labelled as the same while human believes they should belong to different classes.
Finally, \textbf{uncertainty example}, to be discussed for the first time in this paper, covers two entries where the network is uncertain when human can clearly differentiate.

Uncertainty may lead to safety concern in practice. For example, it has been well discussed that adversarial examples \cite{SZSBEGF2014} may lead to disastrous consequences. For example, in a shared autonomy scenario where a human driver relies on a deep learning system to make most of the decisions and expects its handing over of the control only when necessary, the deep learning system may act confidently (i.e., $l(\hat x)\neq 0$) when human believes that it should perform the other action (i.e., $O(\hat x)=O(x)=l(x)\neq l(\hat x)$) or ask for the transfer of control back to human (i.e., $O(\hat x)=0$). These are adversarial examples. 
On the other hand, the Uncertainty example suggests the other serious consequence: 
it is possible that the deep learning system intends to hand back the control (since $l(\hat x)=0$) while the human driver believes the deep learning is able to handle it very well and loses her concentration (cf. Tesla incident and Uber incident). 

Besides the risks from the mis-alignment of prediction decisions (i.e., adversarial example, invariant example, and uncertainty example), we have the following:  

\begin{definition}[Misalignment on Rigidity of Classification Probability]\label{def:reachabilityexample}
Given $f_j(x)$ and a pre-specified constant $\epsilon$, it is possible that human may expect the unreachability of $f_j(x)+\epsilon$ under certain perturbation on $x$, while neural network can. We call those perturbed inputs $\hat x$ that satisfy $f_j(\hat x) \geq f_j(x)+\epsilon$  \textbf{reachability examples}. 
\end{definition}

\textit{Norm Ball:}
In Definition~\ref{def:safetyrisk}, we use ``$\hat x$ being another input that is perturbed from $x$'' to state that $\hat x$ is close to $x$. This is usually formalised with norm ball as follows. 
\begin{equation}
    \mathcal{B}(x,d,p) = \{\hat x |~||\hat x-x||_p \leq d\}
\end{equation}
Intuitively, $\mathcal{B}(x,d,p)$ includes all inputs that are within a certain distance to $x$. The distance is measured with $L_p$-norm such that $||x||_p=(\sum_{i=1}^{n} x_i^p)^{1/p}$. The ``certain perturbation on $x$'' in Definition~\ref{def:reachabilityexample} is also formalised in this way.

\section{Quantification of Safety Risks}\label{SafetyExpression}

In this paper, we consider three safety risks: adversarial example, uncertainty example, and reachability example. 
First of all, we take a generic definition of safety property.

\begin{definition}\label{def:property}
	A safety property  $s(x)$ is an 
	expression over the outputs $\{f_i(x)~|~i\in \{1...m\}\}$ 
	of the neural network, and we expect that whenever  $s(x) < 0 $, the neural network has safety risk.
\end{definition}
In the following, we show how to instantiate $s(x)$ with 
\textit{specific expressions} in order to quantify the robustness, the reachability, and the uncertainty.

\subsection{Robustness Quantification}
\label{sec:robustness}
Firstly, 
a norm ball  $\mathcal{B}(x,d,p)$ is a \textbf{safe norm ball} if $l(\hat x)=l(x)$ for all $\hat x\in \mathcal{B}(x,d,p)$. 
Moreover, a norm ball  $\mathcal{B}(x,d,p)$ is a \textbf{targeted safe norm ball} w.r.t. a pre-specified label $l$ if $l(\hat x)\neq l$ for all $\hat x\in \mathcal{B}(x,d,p)$. Intuitively, a safe norm ball requires all the inputs within it to have the same label as the center point $x$, while a targeted safe norm ball is to avoid having any input to have a specific label $l$. 

Based on safe norm balls, we define the robustness as below.

\begin{definition}[Robustness]\label{def:localrobustness}
	Given a network $f$, an input $x$, and a norm ball $\mathcal{B}(x,d,p)$, the  robustness of $f$ on $x$ and $\mathcal{B}(x,d,p)$ is to find the maximum radius $d'$ that can make $\mathcal{B}(x,d',p)$ safe. More specifically, $\mathcal{B}(x,d',p)$ is a safe norm ball, and for all $d''>d'$, $\mathcal{B}(x,d'',p)$ is not a safe norm ball. We use $R(x,d,p) $ to denote such a maximum safe radius $d'$, and call it robustness radius. 
\end{definition}

It is noted that $R(x,d,p) \leq d$. Intuitively, the robustness of $f$ on $x$ and $\mathcal{B}(x,d,p)$ is evaluated with the maximum radius of safe norm balls, which are  centered at $x$ and within the norm ball $\mathcal{B}(x,d,p)$. 
We remark that, accurately calculating the robustness is extremely difficult in a high-dimensional space, see e.g.,~\cite{SZSBEGF2014,katz2017reluplex}. 

Below, we instantiate the safety property $s(x)$ with {\em Confidence Interval} expression, which can be used to quantify the robustness.

\begin{definition}[Confidence Interval Expression]\label{def:gapfunction}
	Let $f$ be a network, $x$ an input,
	and $l_1, l_2 \in \{1...m\}$ two labels, we define confidence interval expression as follows: 
	\begin{equation}
		s_{CI}(x)(l_1,l_2) = f_{l_1}(x) - f_{l_2}(x) - \epsilon
	\end{equation} 
	where $\epsilon \in [0,1]$ 
	specifies the minimum confidence interval required by the user.
	\end{definition}

According to Definition~\ref{def:property}, we use $s(x)<0$ to express the existence of potential risks. Therefore, intuitively, the expression $s_{CI}(x)(l_1,l_2)$ suggests a safety specification that the confidence gap between labels $l_1$ and $l_2$ on input $x$ has to be larger than a pre-specified value $\epsilon$. Depending on the concrete safety requirements, a user may instantiate $l_1$, $l_2$, and $\epsilon$ into different values. 
We can instantiate $l_1$ and $l_2$ and obtain the following concrete confidence-interval expressions: 
	\begin{itemize}
		\item Case-1: $s_{CI}(x)(j_1,j_2)$, where for some other input $x_0\neq x$, $j_1 = \arg\max_{j} f_j(x_0)$ is the label with the greatest confidence value and $j_2 = \arg\max_{j\neq j_1} f_j(x_0)$ is the label with the second greatest confidence value;
		\item Case-2: $s_{CI}(x)(j_1,l)$ for some given label $l$;
		\item Case-3: $s_{CI}(x_0)(j_1,j_m)$, where $j_m = \arg\min_{j} f_j(x_0)$ is the label with the smallest confidence value.
	\end{itemize}

Intuitively, the above expression maintain different types of discrepancies between two confidence values of an input $x$. In particular, the expression $s_{CI}(x)(j_1,j_2)$ in Case-1 is closely related to the resistance of DNNs to untarget adversarial attacks. Expression in Case-2 is reflect the robustness to target adversarial attacks. In both cases, we may use $\epsilon=0$, to denote a mis-classification, or assign $\epsilon$ with some value to make sure that the network mis-classifies with high confidence (a more serious scenario). And expression in Case-3 instead captures the largest variation between confidence values.

While $s_{CI}(x)(j_1,j_2)$ provides an expressible way to specify whether an input \emph{directly} leads to the safety risk, we need to show how to use this expression for the purpose of  evaluating robustness. 
Below, we define a Lipschitzian metric.

\begin{definition}[Lipschitzian Metric]\label{def:localmetric}
	Given an expression $s(x)$, a norm ball $\mathcal{B}(x,d,p)$ centered at an input $x$, we let $\localmetric(s, x, d, p)$ be a Lipschitzian metric, defined as follows.
	\begin{equation}\label{equ:localrobustness}
		\localmetric(s, x, d, p) = \sup_{\hat x \in \mathcal{B}(x,d,p)}\dfrac{|s(x)-s(\hat x)|}{||x-\hat x||_p}
	\end{equation}
\end{definition}
Intuitively, the metric is, based on a given point $x$, to find the greatest changing rate within the norm ball $\mathcal{B}(x,d,p)$. 
The following theorem shows that, the  robustness radius $R(x,d,p) $ can be estimated \emph{conservatively} if the Lipschitzian metric can be computed. 

\begin{theorem}\label{thm1}
Given a neural network $f$, an input $x$, and a norm ball $B(x,d,p)$, 
we have that, $\mathcal{B}(x,d',p)$ is a safe norm ball when $\displaystyle d'=\frac{s(x)}{\localmetric(s, x, d, p)}\leq d$. 
\end{theorem}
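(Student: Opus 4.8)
The plan is to read the theorem as a direct consequence of the supremum that defines the Lipschitzian metric $\localmetric(s,x,d,p)$, treating that quantity as a certified Lipschitz bound on $s$ in a neighbourhood of $x$. First I would unpack Definition~\ref{def:localmetric}: because $\localmetric(s,x,d,p)$ is the \emph{supremum} of the ratio $|s(x)-s(\hat x)|/||x-\hat x||_p$ over all $\hat x\in\mathcal{B}(x,d,p)$, each individual point must obey the corresponding bound, i.e.
\[
|s(x)-s(\hat x)|\;\le\; \localmetric(s,x,d,p)\cdot||x-\hat x||_p
\qquad\text{for all }\hat x\in\mathcal{B}(x,d,p).
\]
The centre $\hat x=x$ is handled separately since both sides vanish, and the inequality holds whether or not the supremum is attained, so no continuity or compactness assumption on $s$ is needed.

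Second, I would restrict to the smaller ball $\mathcal{B}(x,d',p)$. Since the hypothesis supplies $d'\le d$, every $\hat x$ with $||x-\hat x||_p\le d'$ also lies in $\mathcal{B}(x,d,p)$, so the bound above applies and gives
\[
|s(x)-s(\hat x)|\;\le\; \localmetric(s,x,d,p)\cdot d' \;=\; \localmetric(s,x,d,p)\cdot\frac{s(x)}{\localmetric(s,x,d,p)} \;=\; s(x).
\]
Third, I would convert this into a lower bound on $s(\hat x)$: from $s(\hat x)\ge s(x)-|s(x)-s(\hat x)|\ge s(x)-s(x)=0$ we obtain $s(\hat x)\ge 0$ for every $\hat x\in\mathcal{B}(x,d',p)$. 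By Definition~\ref{def:property} a safety risk occurs only when the property is negative, so $s(\hat x)\ge 0$ certifies the absence of risk at each such $\hat x$; for the robustness instantiation $s_{CI}$ this is precisely the preservation of the label, hence $\mathcal{B}(x,d',p)$ is a safe norm ball.

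The inequality chain itself is routine, so the genuine work is bookkeeping rather than analysis. The first point is well-definedness of the radius $d'=s(x)/\localmetric(s,x,d,p)$: it is a sensible nonnegative radius only when $s(x)\ge 0$ and $\localmetric(s,x,d,p)>0$. Here I would invoke legitimacy of $x$ — the condition $l(x)=O(x)\neq 0$ forces the dominant confidence gap to exceed the labelling threshold, making $s_{CI}(x)\ge 0$ — and I would flag the degenerate cases ($s(x)=0$ yielding the trivial radius $d'=0$, and $\localmetric=0$ yielding a constant $s$ and hence an unconstrained ball) explicitly. The second, more conceptual, point is the final implication ``$s(\hat x)\ge 0 \Rightarrow$ safe,'' which rests on the generic reading of Definition~\ref{def:property}; I would make it concrete for Case-1, where $s_{CI}(\hat x)(j_1,j_2)\ge 0$ keeps $j_1$ the dominating label throughout the ball and therefore $l(\hat x)=l(x)$. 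The word \emph{conservative} in the surrounding text is justified by the same chain: the certified radius $d'$ can only under-estimate the true robustness radius $R(x,d,p)$, since $s$ may stay nonnegative well beyond the distance $d'$ at which the worst-case Lipschitz bound first permits $s$ to reach $0$.
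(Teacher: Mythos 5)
Your proof is correct and follows essentially the same route as the paper's: both apply the pointwise Lipschitz-type bound implied by the definition of $\localmetric(s, x, d, p)$ to points of the smaller ball $\mathcal{B}(x,d',p)$ (which lie in $\mathcal{B}(x,d,p)$ because $d'\leq d$) and conclude $s(\hat x) \geq 0$ there, hence safety. The differences are only cosmetic: you derive the pointwise inequality directly from the supremum in Definition~\ref{def:localmetric} (the paper instead cites Lipschitzness of networks), you replace the paper's case split on $s(x+\theta)\geq s(x)$ versus $s(x+\theta)<s(x)$ with a single triangle-inequality step, and you make explicit the well-definedness and degenerate cases ($s(x)\geq 0$, $\localmetric>0$) that the paper leaves implicit.
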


\begin{proof}
By the robustness definition in Definition~\ref{def:localrobustness}, we need to have 
\begin{equation}\label{eqn-162}
	\forall \theta: ||\theta||_p \leq d' \Rightarrow s(x+\theta) \geq 0 
\end{equation}
Since neural networks are Lipschitz~\cite{RHK2018}, 
we have that, for all $ x+\theta \in \mathcal{B}(x,d,p)$, \begin{equation}\label{eqn-16}
	|s(x) - s(x+\theta)| \leq \localmetric(s, x, d, p)~||\theta||_p
\end{equation}
We consider two possible cases: $s(x+\theta) \geq s(x)$ or $s(x+\theta) < s(x)$. For the case of  $s(x+\theta) \geq s(x)$, it is straightforward that $s(x+\theta)\geq 0$, since $s(x)\geq 0$ by the safety requirement. For the  case of $s(x+\theta) <  s(x)$, we have that 
\begin{equation}\label{eqn-163}
		s(x) - \localmetric(s, x, d, p)~||\theta||_p  \leq  s(x+\theta)
\end{equation}
To ensure $s(x+\theta)\geq 0$, it is sufficient to have $s(x) - \localmetric(s, x, d, p)~||\theta||_p  \geq 0 
$. By $||\theta||_p \leq d'$, it is sufficient to have $s(x) - \localmetric(s, x, d, p)~d'  = 0 
$. Therefore, if we have $d'=\plainfrac{s(x)}{\localmetric(s, x, d, p)}$ then Eqn.~(\ref{eqn-162}) holds, i.e., $\mathcal{B}(x,d',p)$ is a safe norm ball.  

Moreover, we require that $d'\leq d$, since otherwise Eqn.~(\ref{eqn-16}) may not hold. Intuitively, this is because the computation of $\localmetric(s, x, d, p)$ is conducted within 
$\mathcal{B}(x,d,p)$, and hence any result based on it may not work over a greater norm ball. $\hspace*{0em plus 1fill}\square$
\end{proof}
The above theorem suggests that, we can use $\plainfrac{s(x)}{\localmetric(s, x, d, p)}$ 
to conservatively estimate the robustness radius $R(x,d,p) $. 
It is known that $s(x)$ is trivial, so the estimation of robustness radius $R(x,d,p) $ is reduced to the estimation of Lipschitz metric $\localmetric(s, x, d, p)$.

\subsection{Uncertainty Quantification}\label{sec:uncertainty}

As explained in Definition~\ref{def:safetyrisk}, adversarial examples -- the risk for robustness -- are not the only class of safety risks. In this section, we study another type of safety risk, i.e., uncertainty examples. To the best of our knowledge, this is the first time this safety risk is studied. 
We remark that, the study of this risk becomes easy, owing to our approach of taking a generic expression $s(x)$. Also, its estimation and detection can take the same algorithm as the robustness quantification. That is, it comes for free. 

Since uncertainty examples represent those inputs on which the network $f$ cannot have a clear decision, we 
need to express the \emph{uncertainty} of the distribution $f(x)$. This can be done by considering the Kullback-Leibler divergence~\cite{KLD-NIPS08} (or KL divergence) from $f(x)$ to e.g., the uniform distribution or another distribution $f(\hat x)$. 

\begin{definition}[Uncertainty Expression]\label{def:klfunction}
	Let $f$ be a network and $x$ an input, we write 
	\begin{equation}\label{equ:uniform}
		s_{U}(x) = - \epsilon -\sum_{l=1}^m\frac{1}{m}\log m f_l(x)
	\end{equation}
	where $\epsilon>0$ is a bound representing, from the DNN developer's view, what is the smallest KL divergence from the uniform distribution for 
	$x$ to be classified as a good behaviour. 
	Moreover, if consider the other distribution $f(\hat x)$ as the basis, 
	we have 
		$s_{U}(x,\hat x) = - \epsilon -\sum_{l=1}^m\frac{1}{m}\log \frac{f_l(x)}{f_l(\hat x)}$.
\end{definition}
Intuitively, the uniform distribution indicates that the network is unsure about the input.
Therefore, in Eqn. (\ref{equ:uniform}), we require as a necessary condition, for the decision on $x$ to be safe, that the KL divergence from $f(x)$ to the uniform distribution (expressed as $-\sum_{l}\frac{1}{m}\log m f_l(x)$) is greater than $\epsilon$. If so, it is believed that the network behaves well on the input $x$. We remark that, the computation of uncertainty example of this kind can be difficult because it lies on the vicinity of the intersection point of all decision boundaries (as illustrated in Fig.~\ref{fig-1}) and such areas are sparse in the input space. 

Moreover, $s_{U}(x,\hat x)$ requires that the decision of $x$ is significantly far away from $\hat x$. That is, it allows a \emph{user-defined safety risk} $f(\hat x)$ and asks for the network decision to stay away from the risks. 

Based on the expressions, we can also define safe norm balls by requiring that no input in a norm ball satisfies $s(x)<0$. The definition of  maximal safe norm ball can also be extended to this context, and we can define the uncertainty metric the same as that of Definition~\ref{def:localmetric}. Without loss of generality, we will continue use $\mathcal{B}(x,d,p)$ and $Q(s,x, d, p)$ to denote them, respectively. 
As before, a conservative estimation of the maximum radius $\mathcal{B}(x,d,p)$ of safe norm balls can be reduced to the computation of $Q(s,x,d,p)$. Therefore, \textbf{\emph{the study of uncertainty quantification comes for free if we are able to work with the robustness quantification}}.

\subsection{Reachability Quantification} \label{sec:reachability}

For reachability, we can define the following 
expression: $s_{R}(x)(l) = f_{l}(x) - \epsilon$,
where $\epsilon \in (0,1)$ is a pre-specified threshold for the rigidity of classification probability. Other notions such as $\mathcal{B}(x,d,p)$ and $Q(s,x, d, p)$ follow the discussion in Section~\ref{sec:robustness}.

\section{Risk Quantification Algorithms}

In this section, we consider practical method to calculate the metric $\localmetric(s, x, d, p)$ as in Definition~\ref{def:localmetric}. Instead of basing our method on gradient-based adversarial attack or the formal analysis via encoding of neural networks -- as we discussed in the related work (Section~\ref{sec:related}), we consider derivative-free optimisation methods, which can efficiently search over samples in $\mathcal{B}(x,d,p)$. We remark that, we use robustness -- $\localmetric(s, x, d, p)$ and $\mathcal{B}(x,d,p)$ -- as example, and the algorithms work with uncertainty and reachability.

Given a trained DNN $f$, a property expression $s: \mathbb{R}^m \to \mathbb{R}$, and a genuine $x \in \mathbb{R}^n$, the Lipschitzian metric can be calculated by solving the following optimization problem:
\begin{equation}\label{equ:localoptimize}
	\begin{array}{lllll}
		\min_{\hat{x}} ~&~ 
		w(\hat{x}) ~&~ 
		s.t. ~&~
		||\hat{x}-x||_p \leq d ~&~ 
		\text{ and }~ \hat{x} \in [0,1]^n \\
	\end{array}
\end{equation}
where $w(\hat{x}) = \plainfrac{||\hat{x}-x||_p}{|s(\hat{x})-s(x)|}$. 
The optimization problem contains a non-convex objective 
(due to the non-convexity of DNNs), together with a set of constraints. Note that, for $p\in\{1,2\}$, the constraints include both nonlinear inequality constraints and box-constraints, and for $p=\infty$,
the constraints include only with box-constraints. 

The optimization is based on a composition of the DNN $f$ and the property expression $s$, both of which may be 
non-differential or not smooth. The analytic form of its first-order derivative is also difficult to get. Methodologically, to achieve the broadest applications, we 
need a single optimization method that can efficiently estimate different DNN properties for various 
property expressions regardless its differentiability, smoothness, or whether an analytic form of derivative exits. In this regard, instead of using gradient-based method, we take a \textbf{derivative-free optimization framework}.
Our optimization solutions are centered around the Mesh Adaptive Direct Search (MADS)~\cite{audet2006mesh}, which is designed for \textbf{black-box optimization} problems for which the functions defining the objective and the constraints are typically seen as black-boxes~\cite{audet2017mesh}. 
It requires no gradient or derivative information but still provides a convergence guarantee to the first-order stationary points based on the Clarke calculus~\cite{Audet2000AnalysisOG,audet2006mesh,audet2017mesh}.

In the following, we will present an 
algorithm for $L_\infty$ norm (Section~\ref{sec:localalgorithm}), enhance the algorithm with tensor-based parallelisation for GPU implementation (Section~\ref{sec:localalgorithmtensor}), and present an algorithm for $L_1$ and $L_2$ norm (Section~\ref{sec:localalgorithmL12}). 

\subsection{\texorpdfstring{$L_\infty$-}-norm Risk Quantification}\label{sec:localalgorithm}

First, we introduce MADS in the context of risk quantification based on $L_\infty$-norm. 
When $p=\infty$, we can transform Eqn.~(\ref{equ:localoptimize}) into the following problem:
\begin{equation}
	\label{equ:localoptimize2}
	\begin{split}
		\min_{\hat{x}} ~~ w(\hat{x})~~~s.t.~~l_d \leq \hat{x} \leq u_d 
	\end{split}
\end{equation}
where $l_d = \max\{x-d, 0\}, u_d = \min\{x+d, 1\}$. 
Instead of presenting the details of MADS \cite{audet2006mesh}, we give its idea. 
Briefly, MADS seeks to improve the current solution by testing points in the neighborhood 
of the current point (the \emph{incumbent}). Each point is one step away in one direction on an iteration-dependent mesh. In addition to these points, MADS can incorporate any search strategy into the optimization to have additional test points. The above process iterates until a stopping condition is satisfied. 

Formally, each iteration of MADS comprises of two stages, a \search{} stage and an optional \poll{} stage. The \search{} stage evaluates a 
number of points proposed by a given search strategy, with the only restriction that the tested points lie on the current mesh. 
The current mesh at the $k$-th iteration is
$M_k = \bigcup_{x \in S_k} \left\{ x + \mesh z \D^{(i)} | z \in \mathbb{N}, \D^{(i)} \in \D \right\}$,
where $S_k \subset \mathbb{R}^n$ is the set of points evaluated since the start of the iteration, $\mesh \in \mathbb{R}_+$ is the \emph{mesh size}, and $\D$ is a fixed matrix in $\mathbb{R}^{\nparams \times n_{\D}}$ whose $n_{\D}$ columns represent viable search directions. We let $\D^{(i)}$ be the $i$-th column of $\D$. In our implementation, we let $\D = \left[\I_\nparams, -\I_\nparams \right]$, where $\I_\nparams$ is the $n$-dimensional identity matrix.

The \poll{} stage is performed if the \search{} fails in finding a point with an improved objective value. \poll{} constructs a \emph{poll set} of candidate points, $P_k$, defined as
$P_k = \left\{x_k + \pollmesh \D^{(i)} | \D^{(i)} \in \D_k \right\},$
where $x_k$ is the incumbent and $\D_k$ is the set of \emph{polling directions} constructed by taking discrete linear combinations of the set of directions $\D$. 
The \emph{poll size} parameter $\pollmesh \ge \mesh$ defines the maximum length of poll displacement vectors $\mesh \D^{(i)}$, for $\D^{(i)} \in \D_k$ (typically, $\pollmesh \approx \mesh \norm{{v}}$).
Points in the poll set can be evaluated in any order, and the \poll{} is opportunistic in that it can be stopped as soon as a better solution is found.
The \poll{} stage ensures theoretical convergence to a local stationary point according to Clarke calculus for nonsmooth functions~\cite{audet2017mesh}. 

If either \search{} or \poll{} succeeds in finding a mesh point with an improved objective value, the incumbent is updated and the mesh size remains the same or is multiplied by a factor $\tau > 1$. 
If neither \search{} or \poll{} is successful, the incumbent does not move and the mesh size is divided by $\tau$. The algorithm proceeds until a stopping criterion is met (e.g., maximum budget of function evaluations). 

\subsection{Tensor-based Parallelisation for \texorpdfstring{$L_\infty$-}-norm Risk Quantification }\label{sec:localalgorithmtensor}

For the problem as in Eqn.~(\ref{equ:localoptimize}), objective function $w(\hat{x})$ includes neural network $f(\hat{x})$. 
Given the availability of tensor-based algorithmic operations in deep learning frameworks such as TesnorFlow, PyTorch, and Caffe, etc, we improve the algorithm described in Section~\ref{sec:localalgorithm} with a tensor-based parallelization, so as to achieve computational efficiency with GPU. As shown in Fig.~\ref{fig-e3}, with a low-end Nvidia GTX1050Ti GPU, to evaluate a 16-layer MNIST DNN on 1,000 images, the time using tensor-based parallelization is 25 times faster than without using one. Specifically, our new algorithm -- enhancing MADS with parallelization -- can improve the speed roughly $(n_k+m_k)/2$ times in terms of DNN inquiry numbers, where $n_k$ and $m_k$ -- to be introduced below -- are such that $n_k$ is around $\geq 2n$ depends on the search strategy and iterations and $m_k$ is at least $\geq n+1$. 

\begin{figure}[h]
  \centering
  \includegraphics[width=0.56\linewidth]{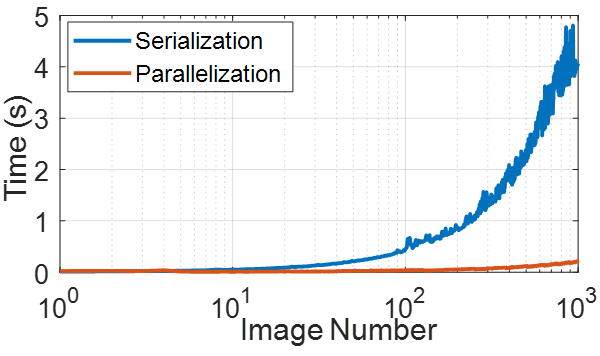}
  \caption{Number of queries to the DNN w.r.t. the number of images, with and without tensor-based parallelization -- a significant motivation for our tensor-based parallelisation algorithm.}
  \label{fig-e3}
\end{figure}

\textbf{Comparing} to the traditional MADS in~\cite{audet2006mesh}, we perform the following improvements in terms of parallelization in both \search{} and \poll{} stages. Algorithm-\ref{alg:tp-mads} provides the pseudo-code for the Parallelised algorithm.

\begin{itemize}
	\item Parallelisation in \search{} Stage: Assuming at $k$-th iteration, there are $n_k$ hyper-points, i.e., $\{x_1^k,x_2^k,..., x_{n_k}^{k}\} \in M_k$, We stack all those hyper-points into a 3-D Tensor $\mathcal{M}^k$ such that $\mathcal{M}^k(i,j,k)$ is the $i$-th element in $x_j^k$. Then we feed $\mathcal{M}^k$ into the GPU to perform the DNN evaluation.
	
	\item Parallelisation in \poll{} Stage: Assuming at $k$-th iteration, there are $m_k$ points in set $P_k = \{x_1^k,x_2^k,..., x_{m_k}^{k}\}$. We stack all those hyper-points into a 3-D Tensor $\mathcal{P}^k$ such that $\mathcal{P}^k(i,j,k)$ is the $i$-th element in $x_j^k$. Then we feed $\mathcal{P}^k$ into the GPU to perform the DNN evaluation.
\end{itemize}

\begin{algorithm}[t!]
	\caption{{Tensor-based Parallelised MADS (TP-MADS)}}
	\label{alg:tp-mads}
	\SetAlgoLined
	\KwIn{Objective function $w(x)$, starting point ${x_0}$, variable constraint $l_d$ and $u_d$\\
		\textbf{Initialization:} $\meshz \leftarrow 2^{-10}$, $\pollmeshz \leftarrow 1$, $k \leftarrow 0$, evaluate $w(x)$ on initial design
	}
	\While{\texttt{fevals} $>$ \texttt{MaxFunEvals} \textbf{ or } $\pollmesh < 10^{-6} $}
	{
		{Stack $\{x_1^k,x_2^k,..., x_{n_k}^{k}\} \in M_k$ into a tensor format $\mathcal{M}^k$;}\\
		{Evaluate $w(x)$ on $\mathcal{M}^k$ via parallelization;}\\
		\If{\textsc{search} is \textsc{not} \emph{successful}}
		{Stack $P_k = \{x_1^k,x_2^k,..., x_{m_k}^{k}\}$ into a tensor format $\mathcal{P}^k$;\\
			Evaluate function $w(x)$ on $\mathcal{P}^k$ via parallelization;} 
		\If{Iteration $k$ is \emph{successful}}
		{Update incumbent $x_{k + 1}$;\\
			\eIf{\poll{} was \emph{successful} }{$\mesh \leftarrow 2 \mesh$, $\pollmesh \leftarrow 2 \pollmesh$;}{$\mesh \leftarrow \frac{1}{2} \mesh$, $\pollmesh \leftarrow \frac{1}{2} \pollmesh$}}
		{Update $k \leftarrow k + 1$}\\}
	\KwOut{ $x_{\text{end}} = \arg \min_k w(x_k)$ and $w(x_{\text{end}})$ }
\end{algorithm}	

\subsection{\texorpdfstring{$L_1$}~ and \texorpdfstring{$L_2$-
}-norm Risk Quantification}\label{sec:localalgorithmL12}

For $L_1$ or $L_2$-norm, 
we need to solve an optimization problem with box-constraint as well as nonlinear inequality constraints, as shown in Eqn.~(\ref{equ:localoptimize}). 
We take an Augmented Lagrangian Algorithm ~\cite{lewis2006generating}
to solve a nonlinear optimization problem with nonlinear constraints, linear constraints, and bounds. Specifically, bounds and linear constraints are handled separately from nonlinear constraints. 
We transform the constrained optimization problem into an unconstrained problem by combining the fitness function and nonlinear constraint function using the Lagrangian and the penalty parameters, as below:
\begin{equation}\label{eqn-49}
	\mathbf{\Theta}(x,\lambda,s) = w(x) - \lambda q \log(q+c(x)),
\end{equation}
where $x\in[0,1]^n$, $\lambda>0$ is a Lagrange multiplier, $q>0$ is a positive shift, and $c(x) = ||x-x_0||_p - d$ where $p \in \{1,2\}$.

Algorithm-\ref{alg:mads-inequality} provides the pseudo-code to solve the $L_1$ and $L_2$-norm risk quantification problem.
The idea of the algorithm is as follows. 
It starts by initialising parameters $\lambda$ and $q$. Then, we minimise a sub-problem, which has fixed values for $\lambda$ and $q$ and is solved by calling Tensor-based Parallelised Mesh Adaptive Direct Search as shown in
Algorithm-\ref{alg:tp-mads}. When the subproblem is minimised to a required accuracy and satisfies feasibility conditions, the Lagrangian estimate (Eqn.~(\ref{eqn-49})) is updated. Otherwise, the penalty parameter $\lambda$ is increased by a penalty factor, together with an update on $q$. This results in a new sub-problem formulation and minimization problem. The above steps (other than the initialisation) are repeated until a stopping criteria are met.

\begin{algorithm}[t!]
	\caption{TP-MADS with Inequality Constraints}
	\label{alg:mads-inequality}
	\SetAlgoLined
	\KwIn{Objective function $w(x)$, starting point ${x_0}$, inequality constraint function $c(x)$, variable constraint $l_d = 0$ and $u_d = 1$}
	\textbf{Initialization:} Initialize $q$ and $\lambda$\\
	\While{Termination criteria not satisfied}
	{
		{Call for Algorithm-1 to solve a Sub-problem Eqn.~(\ref{eqn-49})};\\
		{Update Lagrange multiplier estimate $\lambda$;}\\
		{Update positive shift $q$;}
	}
	\KwOut{ $x_{\text{end}} = \arg \min_{x,\lambda,q} \mathbf{\Theta}(x,\lambda,q)$ and $w(x_{\text{end}})$}
\end{algorithm}

\section{Experimental Results } \label{sec:experiments}

First, in Section~\ref{sec:reachabilityexp}, by comparing  with several state-of-the-art tools on the 
reachability quantification, we show the \textbf{efficiency} of \deepexpress. 
Then, in Section~\ref{sec:robustnessexp}, by conducting robustness quantification on networks of different scales, over datasets MNIST, CIFAR-10 and ImageNet, we show the \textbf{tightness} of results and the \textbf{scalability} of \deepexpress. 
Finally, in Section~\ref{sec:evaluation-KL}, we conduct experiments on \textbf{uncertainty} quantification~\footnote{The software will be found at~\url{https://github.com/TrustAI/DeepQuant}}.

\subsection{Experiments on Reachability Quantification}\label{sec:reachabilityexp}

Three state-of-the-art tools are considered. {\bf Reluplex}~\cite{katz2017reluplex} is an SMT-based method for 
  DNNs with ReLU activations; we apply a bisection scheme to achieve the reachability quantification.
  {\bf SHERLOCK}~\cite{dutta2017output} is a MILP-based method dedicated to reachability quantification on DNNs with ReLU activations.
  {\bf DeepGO}~\cite{RHK2018} is a general reachability quantification tool that can work with a broad range of neural networks including those with non-ReLU activation layers. 
  
We followed the experimental setup in~\cite{dutta2017output} and trained {\bf ten} neural 
networks, including six ReLU networks 
and four Tanh networks (i.e., networks with tanh activations). Note that, neither SHERLOCK nor Reluplex 
can work with Tanh networks (i.e., \textsf{tanh-NN-6} to \textsf{tanh-NN-9}).
For ReLU networks, i.e., \textsf{ReLU-NN-0} to \textsf{ReLU-NN-5}, the input 
has two dimensions, i.e., $x \in [0,10]^2$. The input dimensions for \textsf{tanh-NN-6} to \textsf{tanh-NN-9} are gradually increased, from $x\in [0,10]^2$ to $x\in [0,10]^5$. 
For fairness of comparison, we also implement \deepexpress\ in Matlab2018a, running on a Laptop with i7-7700HQ CPU and 16GB RAM. 
The software and hardware setup are made exactly the same as DeepGO~\cite{RHK2018}. Both Reluplex\footnote{\url{https://github.com/guykatzz/ReluplexCav2017}} and SHERLOCK\footnote{\url{https://github.com/souradeep-111/sherlock}} 
are configured to run on a different software platform and 
a more powerful hardware platform --  a Linux workstation with 63GB RAM and a 23-Core CPU. 
We record the running time of each tool when its reachability error is within $10^{-2}$. The comparison results are given in Table~\ref{tab-2}. 

\begin{table*}[ht]
  \centering
  \caption{Comparison with SHERLOCK~\cite{dutta2017output}, Reluplex~\cite{katz2017reluplex} and DeepGO~\cite{RHK2018}.}
  \label{tab-2}
  \scalebox{0.99}{
  \begin{tabular}{l|ccc ccc ccc}
     \toprule
       \vtop{\hbox{\strut NN ID}}
     & \multicolumn{1}{|c}{\vtop{\hbox{Layer$\times$Neuron}}}
         &{\vtop{\hbox{\strut SHERLOCK~~}} }
         &{\vtop{\hbox{\strut Reluplex~~}}} 
         &{\vtop{\hbox{\strut DeepGO~~}}}
         &{\vtop{\hbox{\strut \deepexpress}}}\\ %
         \hline
      \textbf{ReLU-NN-0} &  {1$\times$100} & {1.9s} &  {1m 55s}& {0.4s} & {1.80s} \\

      \textbf{ReLU-NN-1} &  {1$\times$200} & {2.4s} &  {13m 58s }& {1.0s} & {1.56s} \\

      \textbf{ReLU-NN-2} &  {1$\times$500} & {17.8s} &  {Timeout}& {6.8s} & {\bf 1.21s}\\

      \textbf{ReLU-NN-3} &  {1$\times$500} & {7.6s} &  {Timeout}& {5.3s} &  {\bf 1.26s}\\

      \textbf{ReLU-NN-4} &  {1$\times$1000} & {7m 57.8s} &  {Timeout}& {1.8s} &  {\bf 1.21s} \\

      \textbf{ReLU-NN-5} &  {6$\times$250} & {9m 48.4s} &  {Timeout}& {15.1s} & {\bf 2.81s} \\
      
     \textbf{tanh-NN-6 (2-input)} &  {6$\times$250} & {N/A} &  {N/A}& {14.8s} & {\bf 2.93s} \\
    
    \textbf{tanh-NN-7 (3-input)} &  {6$\times$250} & {N/A} &  {N/A}& {58.7s} & {\bf 8.92s} \\
    
    \textbf{tanh-NN-8 (4-input)} &  {6$\times$250} & {N/A} &  {N/A}& {394.1s} & {\bf 20.94s} \\
    
    \textbf{tanh-NN-9 (5-input)} &  {6$\times$250} & {N/A} &  {N/A}& {2680.4s} & {\bf 129.81s} \\

     \bottomrule
  \end{tabular}
   }
\end{table*}

From Table~\ref{tab-2}, our tool \deepexpress\ is consistently better than SHERLOCK and Reluplex.
For the six ReLU-based 
networks, \deepexpress\ has an averaged computation time of around $1.6s$, which has 
{\em 108-fold} and {\em 300-fold} improvement over SHERLOCK and Reluplex (excluding timeouts), respectively. 
Furthermore, {\em the performances of both Reluplex and SHERLOCK are considerably affected by the increase of neuron numbers and layers, while  \deepexpress\ does not}. 
Although both DeepGO and \deepexpress\ can work on Tanh networks, 
DeepGO is significantly more sensitive to the dimension of the input space, with the computation time is nearly exponential w.r.t. the input dimension. Thus, {\em for a neural network with 
high dimensional inputs, \deepexpress\ demonstrates significant superiority over DeepGO}. For example, for the neural network \textsf{tanh-NN-9} (with five input dimensions), \deepexpress\ is nearly {\em 20} times faster.  

In summary, \deepexpress\ exhibits \textbf{better efficiency} than Reluplex, SHERLOCK, and DeepGO. It is less sensitive to the size of network and the input dimensions.

\subsection{Experiments on Robustness Quantification}\label{sec:robustnessexp}

\subsubsection{ACSC-Xu Networks}
The first experiment is performed on a 5-input and 5-output ACSC-Xu neural networks~\cite{katz2017reluplex}. We aim to validate the accuracy -- or tightness -- of \deepexpress\ on robustness quantification.
From this section, all experiments are conducted on a PC with i7-7700HQ CPU, 16GB RAM, and GPU GTX1050Ti. DNNs are trained with the Neural Network Toolbox in MATLAB2018a. 
The ACSC-Xu neural network is trained on a simulated dataset and includes 5 fully-connected layers, ReLU activation functions, and overall it contains 300 hidden neurons~\cite{katz2017reluplex}. 
The five input variables of ACAS-Xu neural network are shown in Fig.~\ref{fig-acscxu} ( which are obtained from various kinds of sensors~\cite{bunel2017piecewise}), where $\rho$ (m) presents Distance from ownship to intruder, $\theta$ (rad) is Angle to intruder relative to ownship heading direction, $\psi$ (rad) shows Heading angle of intruder relative to ownship, $v_{own}$ (m/s) and $v_{int}$ (m/s) display Speed of ownship and intruder respectively.

\begin{figure}[ht]
	\centering
	\includegraphics[width=0.56\linewidth]{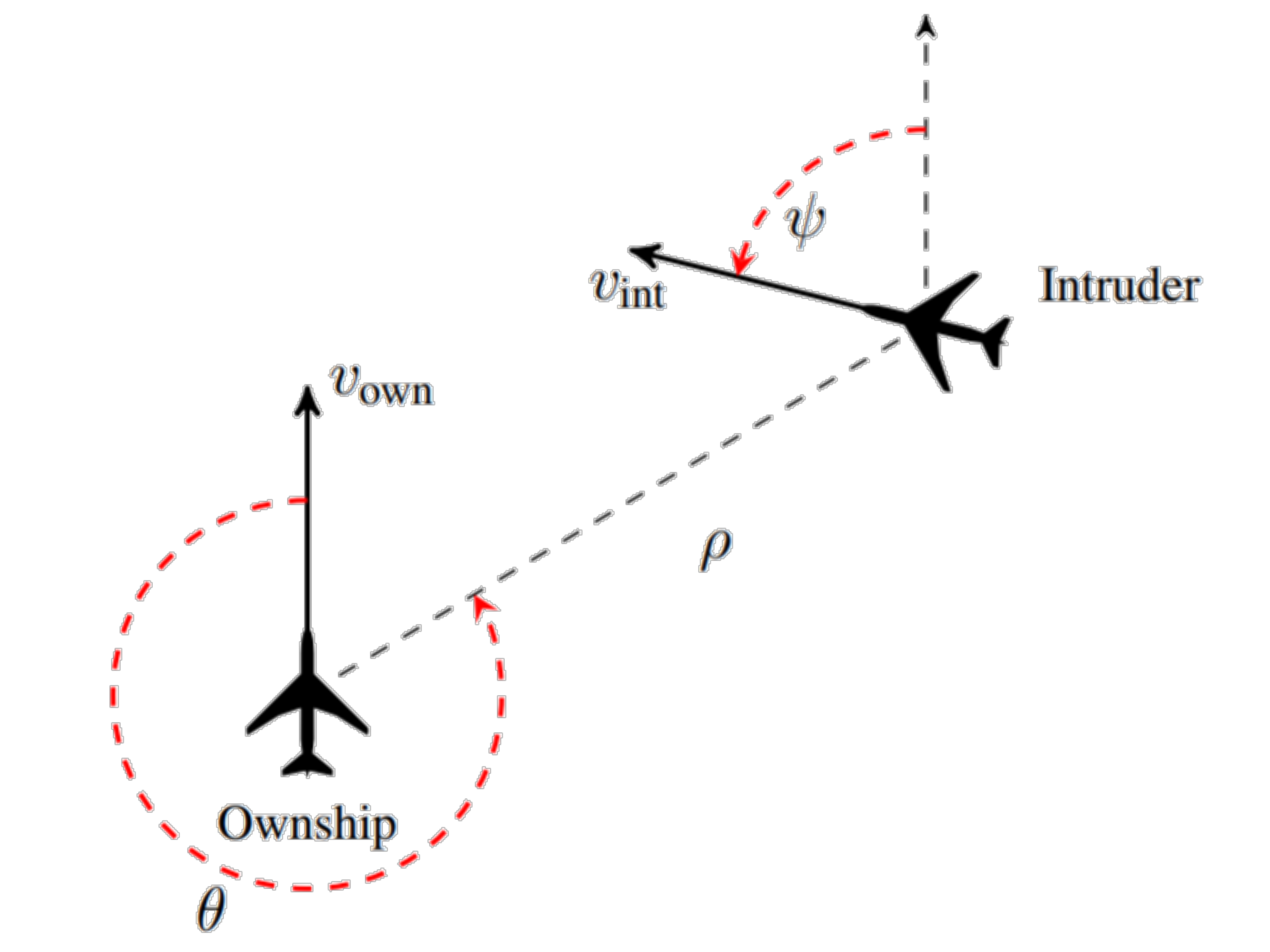}
	\caption{Geometry for ACAS Xu Horizontal Logic Table (from~\cite{katz2017reluplex}).}
	\label{fig-acscxu}
\end{figure}

We adapt the safety verification tool \deepgo~\cite{RHK2018} for the computation of  ground-truth robustness quantification values. Moreover, we implement the other baseline method -- a random sampling (\textbf{RS}) method, which uniformly samples $5\times 10^5$ images in a given norm ball. 

Fig.~\ref{fig-3} (a) and Fig.~\ref{fig-3} (b) present the comparison on the accuracy and the query number, respectively,  
over different norm distance ($L_\infty$, $L_1$ and $L_2$). We see that \deepexpress\ can almost reach the ground-truth accuracy value computed by DeepGO (as in Fig.~\ref{fig-3} (a)), but with much less number of queries (as in Fig.~\ref{fig-3} (b)). Precisely, 
\deepexpress\ takes around $2\times 10^3$ DNN queries, while \deepgo\ requires around $1.3\times10^4$ DNN queries -- {\bf 6} times difference. Moreover, \deepexpress\ performs much better than {\bf RS}, on both the tightness and the efficiency. 
In other word, this experiment exhibits \textbf{both the tightness of the result and the efficiency of the computation}.

\begin{figure}[ht]
\centering
	    \centering
	    \begin{minipage}{0.47\linewidth}
	        \centering
		    \includegraphics[width=1\linewidth]{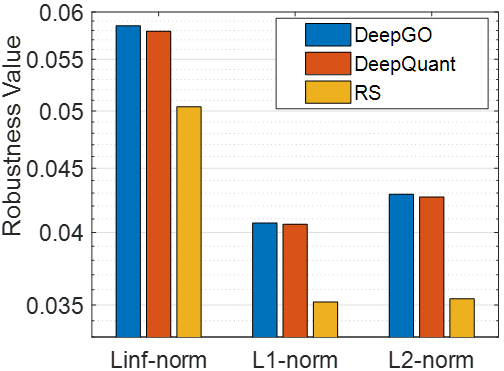}
		    \text{(a)}
	    \end{minipage}%
	    \hspace{3mm}
	    \begin{minipage}{0.47\linewidth}
		    \centering
		    \includegraphics[width=1\linewidth]{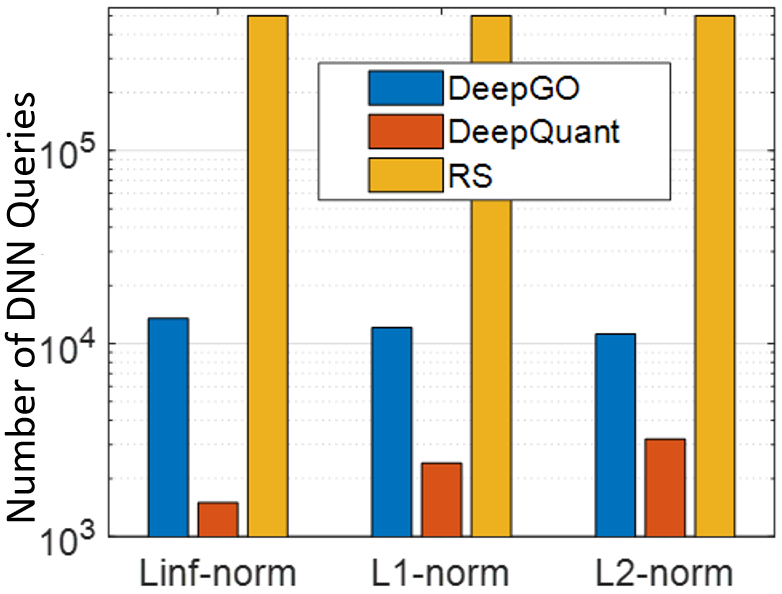}
		    \text{(b)}
	    \end{minipage}
	\centering
	\caption{{\bf(a)} Accuracy comparison of Robustness Quantification for $L_\infty$, $L_1$ and $L_2$-norm on the ACSC-Xu network. 
	{\bf~~(b)} Comparing DNN inquiry numbers when using different methods for robustness quantification on the ACSC-Xu network.
	}
    \label{fig-3}
\end{figure}

\subsubsection{MNIST and CIFAR-10 Networks}

We train a 9-layer DNN on MNIST dataset and a 10-layer DNN on CIFAR-10 dataset. They achieve 99.4\% and 78.3\% testing accuracy respectively, which are comparable to the state-of-the-art~\cite{rodrigob} without data augmentation or other layer modifications. 
Table~\ref{tab-MNIST2} and Table~\ref{tab-CIFAR2} present the model structures of MNIST DNN and CIFAR-10 DNN respectively. Table~\ref{tab:datasets} shows the detail information about training dataset and training parameter setups on MNIST and CIFAR-10.

\begin{table*}[ht]
  \centering
  \caption{Structure of MNIST DNN. The pipeline consists of Convolution layer (Conv), Batch-Normalization layer (Batch), and Fully Connected layer (FC). }
  \label{tab-MNIST2}
  \scalebox{0.99}{
  \begin{tabular}{l|ccc ccc ccc}
     \toprule
       \vtop{\hbox{\strut Layer Type}}
         & \multicolumn{1}{|c}
         {\vtop{\hbox{ Number of Channels}} }
         &{\vtop{\hbox{\strut Filter Size}} }
         &{\vtop{\hbox{\strut Stride Value}} }
         &{\vtop{\hbox{\strut Activation}}} 
         &{\vtop{\hbox{\strut Output Size}}}\\
         \hline
        {Conv1} &  {$1$} &  {$3\times 3\times 16$}& {1} & {ReLU} & {$28\times 28 \times 16$} \\
       \hline
        {Conv2 + Batch} &  {$16$} &  {$3\times 3\times 32$}& {1} & {ReLU} & {$28\times 28 \times 32$} \\
       \hline
       {Conv3 + Batch} &  {$32$} &  {$3\times 3\times 64$} & {1} & {ReLU} & {$28\times 28 \times 64$} \\
       \hline
       {Conv4 + Batch} &  {$64$} &  {$3\times 3\times 128$}& {1 }  & {ReLU} & {$28\times 28 \times 128$} \\
       \hline
       {Dropout} &  {-} &  {-}& {-}  & {-} & {$28\times 28 \times 128$} \\
      \hline
       {FC} &  {-} &  {-}& {-} & {ReLU} & {256} \\
      \hline
       {Dropout} &  {-} &  {-}& {-} & {-} & {256} \\
      \hline
      {FC} &  {-} &  {-}& {-} & {Softmax} & {10} \\
     \bottomrule
  \end{tabular}
   }
\end{table*}

\begin{table*}[ht]
  \centering
  \caption{Structure of CIFAR-10 DNN. The pipeline consists of Convolution layer (Conv), Max Pooling (MaxPool), and Fully Connected layer (FC). }
  \label{tab-CIFAR2}
  \scalebox{0.99}{
  \begin{tabular}{l|ccc ccc ccc}
     \toprule
       \vtop{\hbox{\strut Layer Type}}
         & \multicolumn{1}{|c}
         {\vtop{\hbox{ Number of Channels}} } 
         &{\vtop{\hbox{\strut Filter Size}} }
         &{\vtop{\hbox{\strut Stride Value}} }
         &{\vtop{\hbox{\strut Activation}}} 
         &{\vtop{\hbox{\strut Output Size}}}\\
         \hline
       {Conv1} &  {$3$} &  {$3\times 3\times 32$}& {1} & {ReLU} & {$32\times 32 \times 32$} \\
       \hline
       {Conv2} &  {$32$} &  {$3\times 3\times 32$}& {1} &{ReLU}& {$30\times 30 \times 32$} \\
       \hline
       {MaxPool} & {$32$} &  {$2\times 2\times 32$}& {2} &{-}& {$29\times 29 \times 32$}\\
       \hline
       {Conv3} &  {$32$} &  {$3\times 3\times 64$} & {1} & {ReLU} & {$29\times 29 \times 64$} \\
       \hline
       {Conv4} &  {$64$} &  {$3\times 3\times 64$}& {1}  & {ReLU} & {$27\times 27 \times 64$} \\
       \hline
       {MaxPool} & {$64$} &  {$2\times 2\times 64$}& {2} &{-}& {$26\times 26 \times 64$}\\
       \hline
        {Dropout} &  {-} &  {-}& { -} & {-} & {$26\times 26 \times 64$} \\
       \hline
       {FC} &  {-} &  {-}& {-} & {ReLU} & {512} \\
        \hline
       {FC} &  {-} &  {-}& {- } & {Softmax} & {10} \\
     \bottomrule
  \end{tabular}
   }
\end{table*}

\begin{table*}[ht]
    \caption{Detailed information about MNIST and CIFAR-10 dataset.}
    \centering
    \begin{tabular}{|l|c|c|c|c|}
    \hline
         Dataset & Training Set Size & Testing Set Size & Testing Accuracy & Parameter Optimization Setup  \\
         \hline
        MNIST & $60,000$ & $10,000$  & $99.41\%$ & Max Epochs=35, Batch=128, optimizer=SGDM \\  \hline
        CIFAR-10 &  $50,000$ & $10,000$ & $78.30\%$ & Epochs=50, Batch=128, optimizer=SGD \\
        \hline
    \end{tabular}
    \label{tab:datasets}
\end{table*}

Fig.~\ref{fig-3more} shows the robustness quantification results for $L_1$, $L_\infty$-norm and $L_2$-norm respectively on 10 input images (selected from testing dataset) for the MNIST network.
The norm balls for these three different robustness quantification are set as $d = 250$, $d = 0.3$ and $d = 8$ respectively. For random sampling 
we sampled 1,000,000 images in the norm ball to evaluate $\localmetric(s, x, d, p)$ based on Definition~\ref{def:localmetric}.  
 We can see that, \deepexpress\ performs consistently better  while using tens of times less DNN queries. 
 Please note, in this experiment, DeepGO is not included due to its limitation on scalability.
From the Fig.~\ref{fig-3more}, we can see that the proposed robustness quantification method is consistently better than random sampling. Moreover, in our experiment, even through random sampling approach samples $10^6$ images, it still cannot achieves an accurate robustness evaluation.

\begin{figure*}[ht]
\centering
	    \centering
	    \begin{minipage}{0.32\linewidth}
	        \centering
		    \includegraphics[width=1\linewidth]{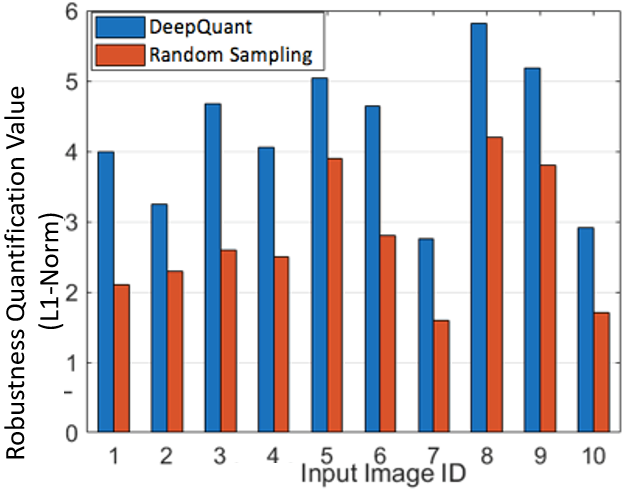}
		    \text{(a)}
	    \end{minipage}%
	    \hspace{2mm}
	    \begin{minipage}{0.32\linewidth}
		    \centering
		    \includegraphics[width=1\linewidth]{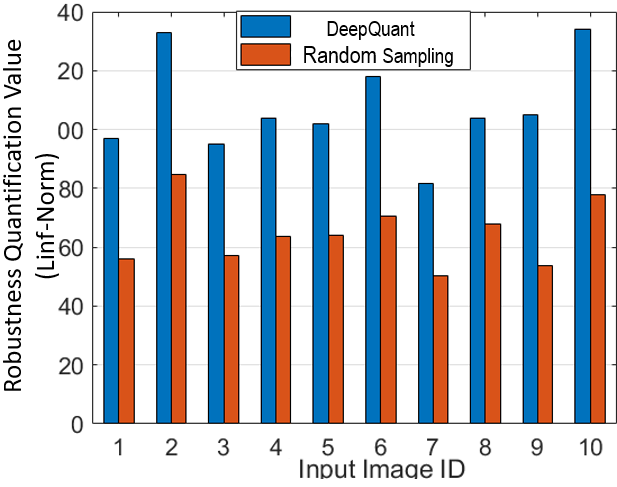}
		    \text{(b)}
	    \end{minipage}
	    \hspace{2mm}
	   \begin{minipage}{0.32\linewidth}
		    \centering
		    \includegraphics[width=1\linewidth]{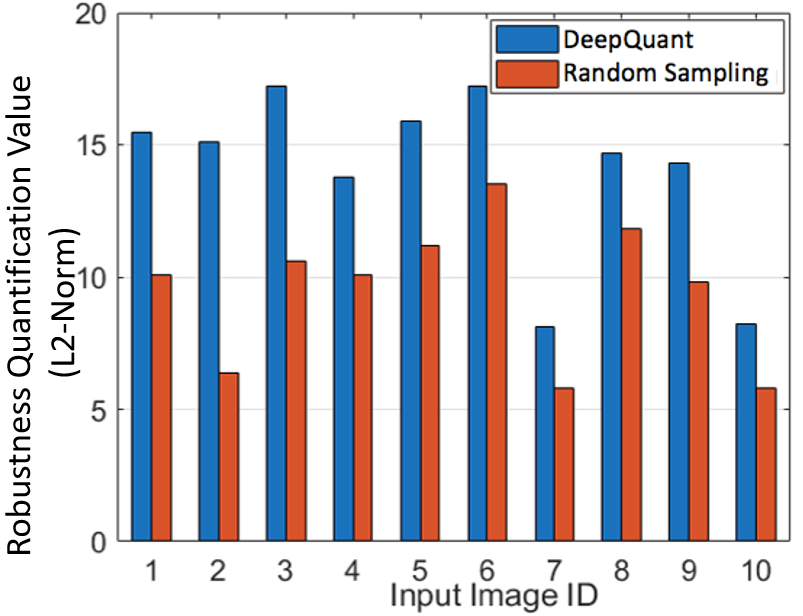}
		    \text{(c)}
	    \end{minipage}
	\caption{{\bf(a)} Comparison of $L_1$-norm robustness quantification on a MNIST deep neural network, $d = 250$. 
	{\bf~~(b)} Comparison of $L_\infty$-norm robustness quantification on a MNIST deep neural network, $d = 0.3$.
 	{\bf~~(c)} Comparison of $L_2$-norm robustness quantification on a MNIST deep neural network, $d = 8$. 
	}
    \label{fig-3more}
\end{figure*}

We might also be interested in 
\textbf{targeted robustness quantification}, which essentially measures the hardness of fooling input images 
into a given target label. For the CIFAR-10 network, Fig.~\ref{fig-4} (a) gives the evaluation results for label-1 as the target label. We can see that label-3 is the most robust while label-7 is the least robust. 

\begin{figure}[ht]
\centering
	   \begin{minipage}{0.47\linewidth}
		    \centering
		    \includegraphics[width=1\linewidth]{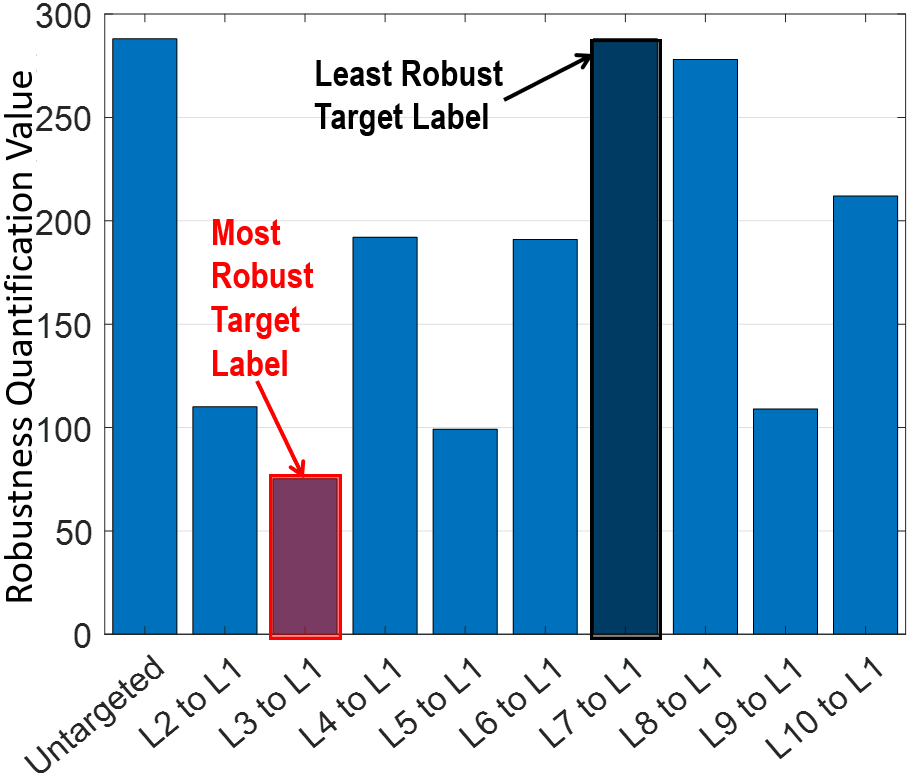}
		    		    \text{(a)}
	    \end{minipage}
	    \hspace{3mm}
	    \begin{minipage}{0.47\linewidth}
	        \centering
	        \includegraphics[width=1\linewidth]{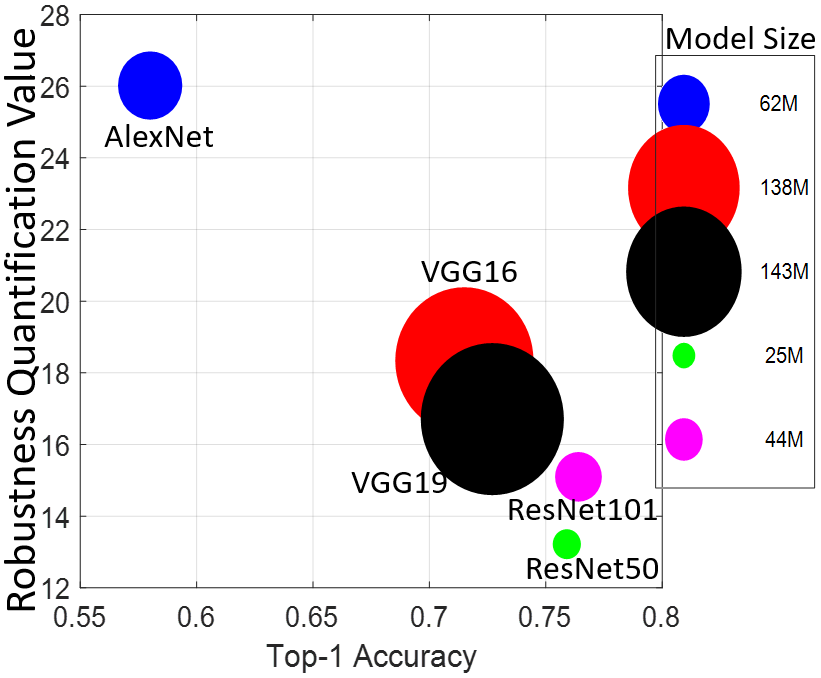}
	        \text{(c)}
	    \end{minipage}
	\caption{
	{\bf ~(a)} Targeted robustness evaluation using \deepexpress\ on a CIFAR-10 DNN, with label-1 as the target label.   We use LX-L1 on the X-axis to indicate the targeted robustness value from label-X to label-1.
    {\bf (b)} Comparison of robustness values of five ImageNet DNNs on an input image using \deepexpress.
	}
    \label{fig-4}
\end{figure}

Moreover, Fig.~\ref{fig-4more} gives some images returned by \deepexpress\ (i.e., $\hat x$ in Eqn.~(\ref{equ:localrobustness}) while evaluating the  robustness of MNIST and CIFAR-10 networks. The MNIST images in Fig.~\ref{fig-4more} (a) 
are generated when performing $L_\infty$, $L_1$ and $L_2$-norm robustness evaluation. 
The CIFAR-10 images in Fig.~\ref{fig-4more} (b) 
are images found by \deepexpress\ when gradually increasing the norm ball radius (i.e., $d$ in $\localmetric(s, x, d, p)$) from $0.1$ to $0.4$. It shows that the visual difference w.r.t. input image 
becomes more obvious for a larger $d$ due to the monotonicity of local robustness value w.r.t. the norm-ball radius. Those images essentially exhibit where the confidence interval decreases the fastest in their corresponding norm balls. We remark that, \textbf{they are different from adversarial examples,  and showcase potentially
important robustness risks of a network}.

\begin{figure}[ht]
\centering
	    \centering
	    \begin{minipage}{0.47\linewidth}
	        \centering
		    \includegraphics[width=1\linewidth]{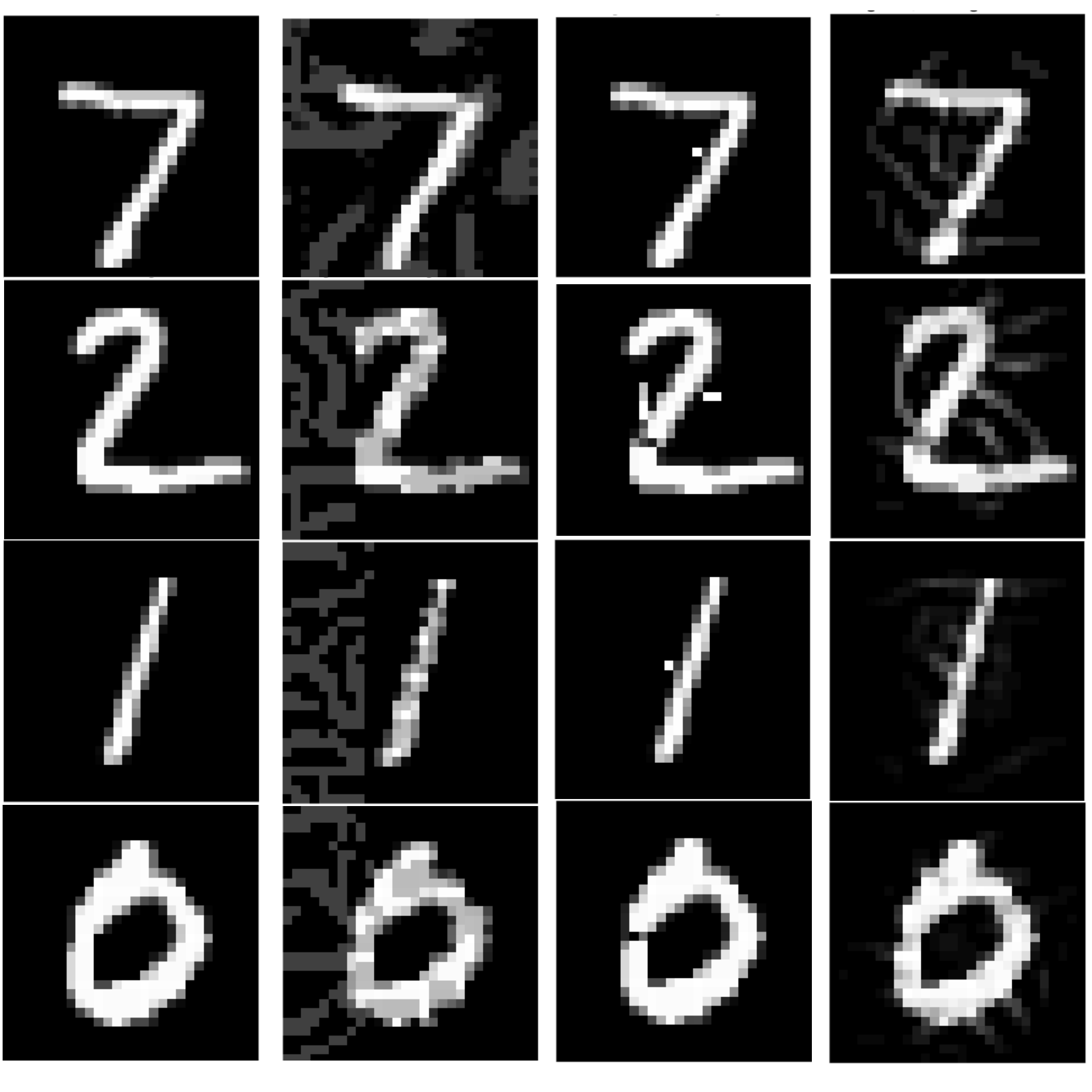}
		    \text{(a)}
	    \end{minipage}%
	    \hspace{3mm}
	    \begin{minipage}{0.47\linewidth}
		    \centering
		    \includegraphics[width=1\linewidth]{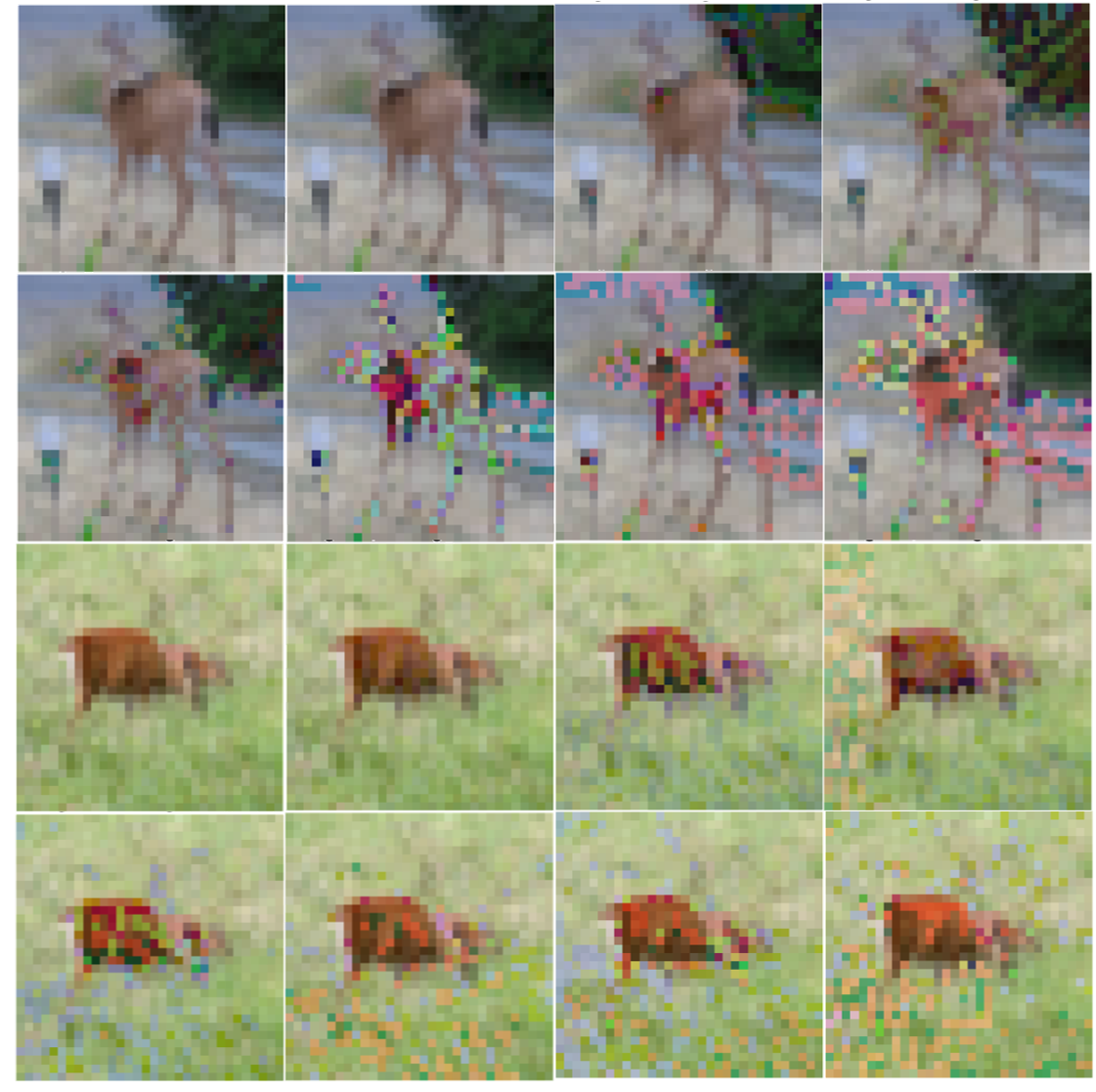}
		    \text{(b)}
	    \end{minipage}
	\centering
	\caption{
	{\bf(a)} MNIST images in the first two rows are examples returned by \deepexpress, i.e., $\hat x$ in Eqn.~(\ref{equ:localrobustness}). {\em From Left to Right: it indicates the original image, images by using $L_\infty$-norm, $L_1$-norm, and $L_2$-norm robustness metric.}
	{\bf~~(b)} CIFAR-10 images in the last two rows are examples returned by \deepexpress\ for an input image by increasing the $L_\infty$-norm ball radius $d = 0.1:0.05:0.4$.
	}
    \label{fig-4more}
\end{figure}

That is, \deepexpress\ can be used to study  variants of safety properties. 

\subsubsection{ImageNet Networks:}

In Fig.~\ref{fig-4} (b), we measure the robustness of five ImageNet models, including AlexNet (8 layers), VGG-16 (16 layers), VGG-19 (19 layers), ResNet50 (50 layers), and ResNet101 (101 layers), on a $L_\infty$-norm ball for a chosen feature (i.e., a $50\times 50$ square). We can see that, for this local norm space and the chosen feature, ResNet-50 achieves best robustness and AlexNet is the least robust one. This experiment shows the \textbf{scalability} of \deepexpress\ in working with large-scale networks. 

In addition, we also presents a case study showing how to use \deepexpress\ to guide the {\bf Design of Robust DNN Models} by using robustness quantification. 
We train six DNNs on MNIST dataset (see their model structure details in Fig.~\ref{fig-nnstructure}, which mainly includes convolution layer (conv), batch-normalization layer (batchnorm), and fully connected layer (fc)).  
The DNNs range from with shallow layers (e.g., DNN-1) to deep layers (e.g., DNN-6). We randomly choose 100 images and use \deepexpress\ to evaluate their $L_\infty$-norm robustness. 
Table-\ref{tab-linfMNIST} presents the result of five input images and the mean robustness values. Based on the robustness statistics, a DNN builder can choose suitable DNNs for different tasks with balance of accuracy and robustness. For example, for a non-critical application that requires high accuracy, DNN-6 is the most suitable one; for a safety-critical application, DNN-2 is a good choice; DNN-4 and DNN-5 however have good balance on accuracy and robustness.
\begin{figure}[ht]
	\centering
	\includegraphics[width=0.9\linewidth]{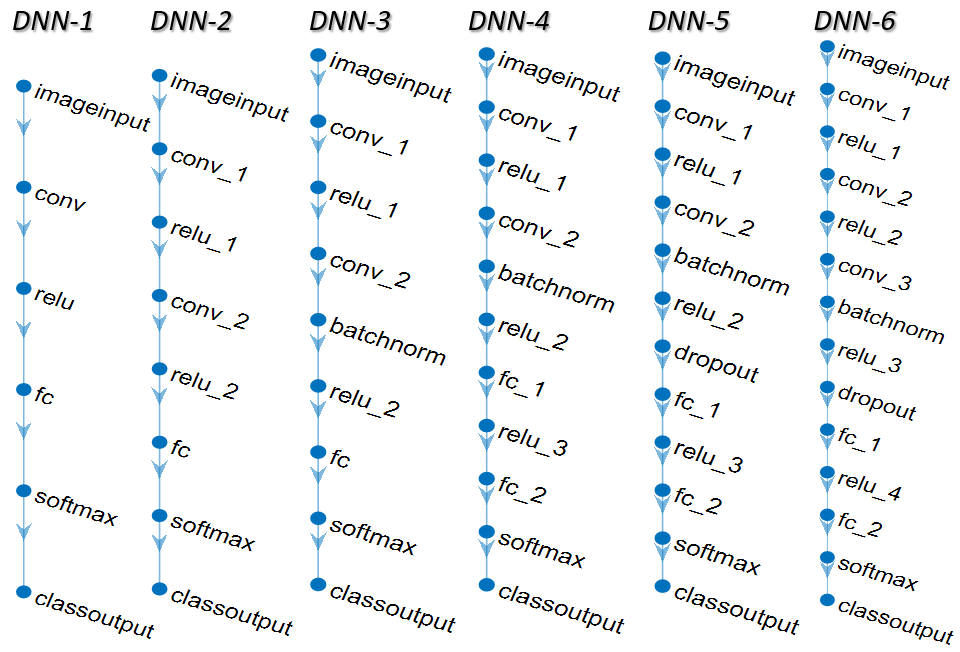}
	\caption{Model Structures of MNIST DNNs from DNN-1 to DNN-6. The filter size of conv\_1, conv\_2 and conv\_3 are $3 \times 3\times 32$, $3 \times 3\times 64$, and $3 \times 3\times 128$ respectively. The probability of dropout is 0.5.}
	\label{fig-nnstructure}
\end{figure}

\begin{table}[ht]
  \centering
  \caption{$L_\infty$-norm robustness quantification results on six MNIST DNNs given five input images.}
  \label{tab-linfMNIST}
  \scalebox{0.9}{
  \begin{tabular}{l|ccc ccc ccc}
     \toprule
     & \multicolumn{1}{|c}{\vtop{\hbox{\strut Img-1}}}
         & \vtop{\hbox{\strut Img-2}}
         &{\vtop{\hbox{\strut Img-3}}} &{\vtop{\hbox{\strut Img-4}}} &{\vtop{\hbox{\strut Img-5}}}&\multicolumn{1}{|c} {\vtop{\hbox{\strut Mean}}}&\multicolumn{1}{|c} {\vtop{\hbox{\strut Test Acc.}}}\\ \hline
         
         \textbf{DNN-1} &  {45.90 }& {93.73 } & {30.44 } &  {39.76 }& {93.33 } & \multicolumn{1}{|c} {60.63 }& \multicolumn{1}{|c} {97.75\%} \\

      \textbf{DNN-2} &  {\bf 35.24 }& {\bf 21.66 } & {\bf 19.79 } &  {\bf 26.82 }& {\bf 57.30 } & \multicolumn{1}{|c} {\bf 32.16 }& \multicolumn{1}{|c} {97.95\%} \\

      \textbf{DNN-3} &  {87.13 }& {69.40 } & {78.31 } &  {84.71 }& {100.30 } & \multicolumn{1}{|c} {83.97 }& \multicolumn{1}{|c} {98.38\%}\\

      \textbf{DNN-4} &  {42.07 }& {46.42 } & {69.90 } &  {46.86 }& {63.42 } & \multicolumn{1}{|c} {53.73 }& \multicolumn{1}{|c} {99.06\%} \\

      \textbf{DNN-5} &  {53.68 }& {54.17 } & {82.78 } &  {41.21 }& {65.75 } & \multicolumn{1}{|c} {59.52 }& \multicolumn{1}{|c} {99.16\%} \\

      \textbf{DNN-6} &  {96.91 }& {70.13} & {75.53 } &  {64.63 }& {80.19 } & \multicolumn{1}{|c} {77.48 }& \multicolumn{1}{|c} {\bf 99.41\%} \\

     \bottomrule
  \end{tabular}
   }
\end{table}

\subsection{Experiments on Uncertainty Quantification} \label{sec:evaluation-KL}

We adopt the same MNIST and CIFAR-10 networks as those in Section~\ref{sec:robustnessexp}. The detailed experimental setup can be found in Table~\ref{tab:datasets}. 

In Fig.~\ref{fig-5} (a), we first showcase what is an uncertainty example. The top row is for a true image which has a high KL divergence to uniform distribution, and the bottom is for the uncertainty image found by \deepexpress\ in a $L_\infty$-norm ball ($d = 0.4$).
From human perception, the uncertainty image should certainly have the same 
label as the original one.

\begin{figure*}[ht]
\centering
\begin{minipage}{0.28\linewidth}
		    \centering
  \includegraphics[width=1\linewidth]{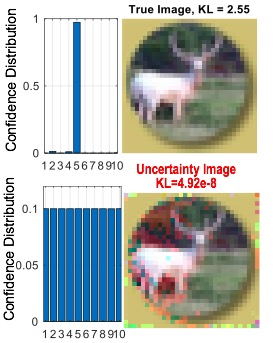} 
		    \text{(a)}
	    \end{minipage}
	   \begin{minipage}{0.36\linewidth}
		    \centering
		    \includegraphics[width=1\linewidth]{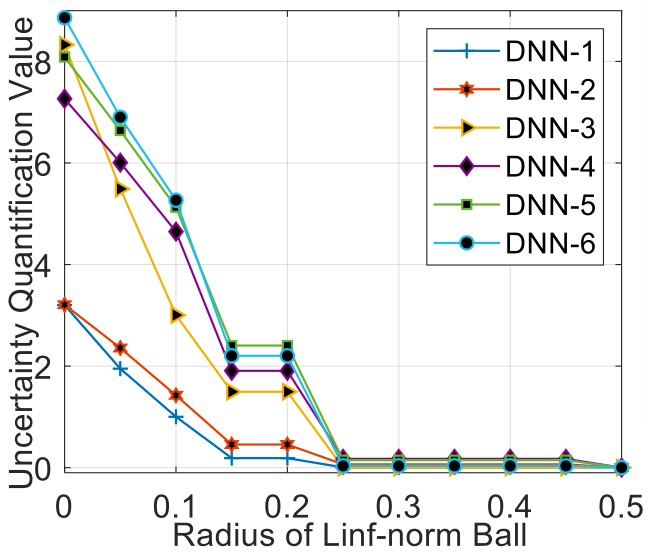}
		    \text{(b)}
	    \end{minipage}
	    \begin{minipage}{0.34\linewidth}
		    \centering
		    \includegraphics[width=1\linewidth]{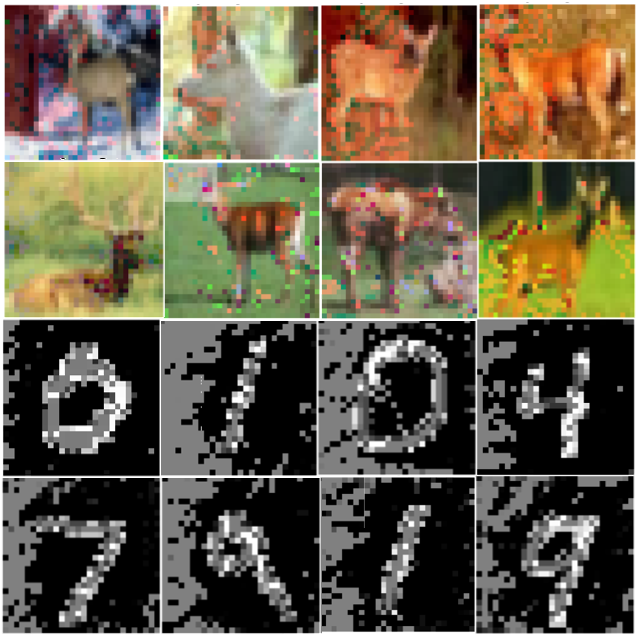}
		    \text{(c)}
	    \end{minipage}
    \caption{ 
    {\bf~~(a)} KL divergence between DNN's output distribution with uniform distribution for a true image and an uncertainty image.
       {\bf~~(b)} Uncertainty values quantified by \deepexpress\ on six MNIST DNNs on a given $L_\infty$-norm ball.
     {\bf~~(c)} Uncertainty examples on MNIST and CIFAR-10 dataset in a $L_\infty$-norm ball of radius $d = 0.4$.
    }
    \label{fig-5}
\end{figure*}

In Fig.~\ref{fig-5} (b), we use \deepexpress\ to quantify uncertainty for the six MNIST networks (see Fig.~\ref{fig-nnstructure} 
for the details of their structures),  
while gradually increasing norm-ball radius from 0 to 0.5. We see that, the uncertainty of networks vastly worsens with the increase of norm-ball radius. At $d=0.15$, DNN-1 and DNN-2 show worse uncertainty than other networks. 
Fig.~\ref{fig-5} (c) gives some uncertainty examples captured by \deepexpress\ on MNIST and CIFAR-10 neural networks. 

In Fig.~\ref{fig-8}, we visualise several intermediate images obtained during a search for an uncertainty image in a $L_\infty$-norm ball with $d = 0.1$. From  left to right, the true image is perturbed by \deepexpress\ with an optimization objective of minimising the KL divergence. With the perturbations, the generated images have gradually increased uncertainty 
related to this specific input.  When the KL divergence is reduced to
0, the network is completely confused and does not know how to classify the uncertainty example. Thus, \deepexpress\ is the very first tool that can insightfully and automatically reveal this new, yet very 
important, safety property in the decision process of a network.

\begin{figure*}[ht]
\centering
    \begin{minipage}{1\linewidth}
		    \centering
    \includegraphics[width=1\linewidth]{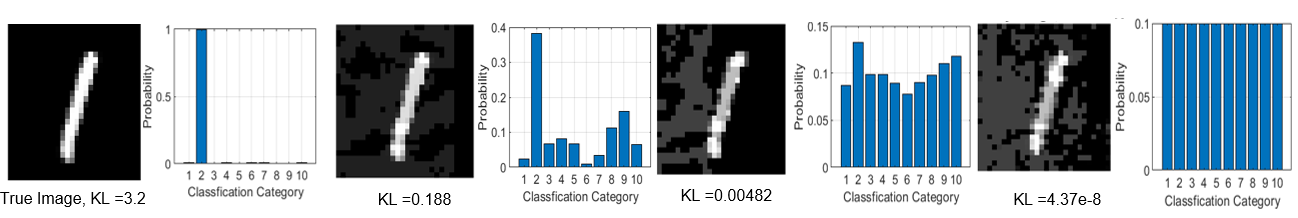}
	\end{minipage}
    \begin{minipage}{1\linewidth}
		    \centering
	\includegraphics[width=1\linewidth]{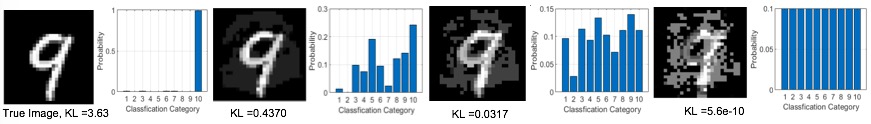} 
	\end{minipage}
    \caption{Intermediate images obtained during the searching for an uncertainty image in a $L_\infty$-norm ball with $d = 0.1$. From left to right, the KL divergence is gradually decreased.}
    \label{fig-8}
\end{figure*}

\section{Conclusion}

This paper presents a novel method~\deepexpress -- based on a generic Lipschitz metric and a derivative-free optimisation algorithm -- to quantify a set of safety risks,  including a new risk called uncertainty example. Comparing with state-of-the-art methods, our method not only can work on a broad range of risks but also returns tight result comparing to the ground truth. Our tool \deepexpress\ is optimized by tensor-based parallelisation, which could run efficiently on GPUs, and thus is scalable to work with large-scale networks including MNIST, CIFAR-10 and ImageNet models. We envision that this paper provides an initial yet important attempt towards the risk quantification concerning the safety of DNNs.

\bibliographystyle{unsrt}  
\bibliography{Arxiv-DeepQuant.bbl}

\end{document}